\documentclass{article}

 \usepackage[preprint]{neurips_2026}

\usepackage[utf8]{inputenc} 
\usepackage[T1]{fontenc}    
\usepackage{hyperref}       
\usepackage{url}            
\usepackage{booktabs}       
\usepackage{amsfonts}       
\usepackage{nicefrac}       
\usepackage{microtype}      
\usepackage{xcolor}         
\usepackage{amsmath}
\usepackage{amssymb}
\usepackage{amsthm}
\usepackage{algorithm}
\usepackage{algorithmic}
\usepackage{xcolor}
\usepackage{graphicx}
\usepackage{xspace}
\usepackage{subcaption}
\usepackage{multirow}
\usepackage{xcolor}
\usepackage{colortbl}
\usepackage{booktabs}
\usepackage{wrapfig}
\usepackage{xcolor}
\usepackage{colortbl}
\usepackage[table]{xcolor}

\title{\raisebox{-0.25\height}{\includegraphics[height=1.4em]{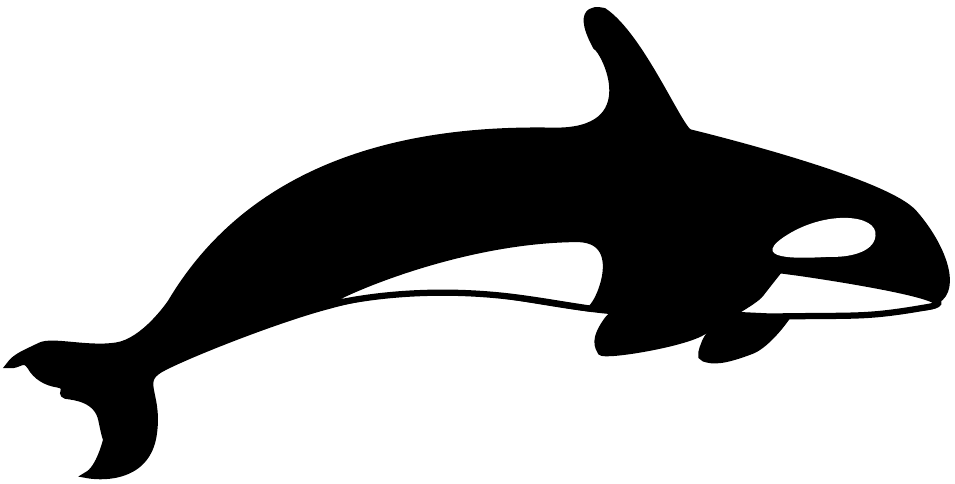}}\,%
       \textsc{Orca}: Hunting Compositional Failures in
       Text-to-Image Diffusion}

\theoremstyle{plain}
\newtheorem{theorem}{Theorem}
\newtheorem{proposition}[theorem]{Proposition}

\theoremstyle{definition}

\definecolor{rowVanilla}{HTML}{EEEEEE}   
\definecolor{rowREPA}   {HTML}{FBE3E3}   
\definecolor{rowREG}    {HTML}{FBE8D2}   
\definecolor{rowORCA}   {HTML}{E5F4E0}   
\definecolor{rowREPA}{HTML}{F7DADA}   
\definecolor{rowREG}{HTML}{FBE8D2}    
\definecolor{rowORCA}{HTML}{E5F4E0}   
\newcommand{\R}{\mathbb{R}}
\newcommand{\E}{\mathbb{E}}

\author{%
  \textbf{Arshia Hemmat}$^{1,2,3,*}$ \quad
  \textbf{Amirhossein Vahidi}$^{1,3,*}$ \quad
  \textbf{Amitis Shidani}$^{5,6}$ \\
  \textbf{Mohammad Vali Sanian}$^{1,7,8}$ \quad
  \textbf{Hesam Asadollahzadeh}$^{1,9}$ \quad
  \textbf{Aryan Yazdan Parast}$^{9}$ \\
  \textbf{Mo Lotfollahi}$^{1,2,3,4}$ \\[0.6em]
  $^1$Wellcome Sanger Institute \quad
  $^2$Cambridge Stem Cell Institute, University of Cambridge \\
  $^3$Cambridge Centre for AI in Medicine, University of Cambridge \\
  $^4$Department of Medicine, University of Cambridge \quad
  $^5$University of Oxford \quad
  $^6$Apple \\
  $^7$Department of Computer Science, University of Helsinki \\
  $^8$Institute for Molecular Medicine Finland (FIMM), University of Helsinki \\
  $^9$School of Computing and Information Systems, University of Melbourne \\[0.4em]
  $^*$Equal contribution \qquad Correspondence: \texttt{ml19@sanger.ac.uk}
}

\begin{document}

\maketitle

\begin{abstract}
Text-to-image diffusion models fail predictably on compositional
prompts: attributes bind to the wrong objects, spatial relations
invert, and multi-object scenes lose count. Recent architectures
already augment CLIP with a T5 encoder precisely because CLIP's
contrastive embedding loses compositional structure, yet these
failures persist. We argue the binding problem is therefore not one
of missing information but of {misaligned} information: a text encoder
preserves compositional structure, but in a representation space
shaped by language modelling rather than vision, and the denoising
objective does not directly reward aligning the two. We show this correspondence can be supplied as an explicit training signal, that the relevant cross-modal information is concentrated in a low-rank
subspace of self-supervised visual features, and that supplying it
can be folded into diffusion training as a single auxiliary loss.
Our method, \textsc{Orca} (\textbf{O}rthogonal \textbf{R}esidual
\textbf{C}ompositional \textbf{A}lignment), aligns the latent of
a diffusion transformer with a low-rank target derived from a
frozen visual encoder, through a predictor whose orthogonal basis is parameterised by a learned residual between T5 and CLIP embeddings, which provides a prompt-dependent
signal for selecting the visual readout subspace. We prove that
the cross-modal information recoverable at a given rank is bounded
by the spectral mass of the visual encoder's covariance in the top
components. Across three diffusion-transformer backbones (DiT-B/2,
DiT-L/2, U-ViT-L), \textsc{Orca} improves FID and GenEval over
both vanilla and REPA baselines at zero inference-time cost; on
DiT-L/2 it reaches FID $\mathbf{16.65}$ and GenEval $\mathbf{0.291}$
at $200\text{K}$ steps, exceeding the strongest $400\text{K}$
baseline at half the training cost, with the largest gains
concentrated on attribute binding, spatial relations, and
multi-object prompts.
\end{abstract}
\begin{figure}
    \centering
    \includegraphics[width=\linewidth]{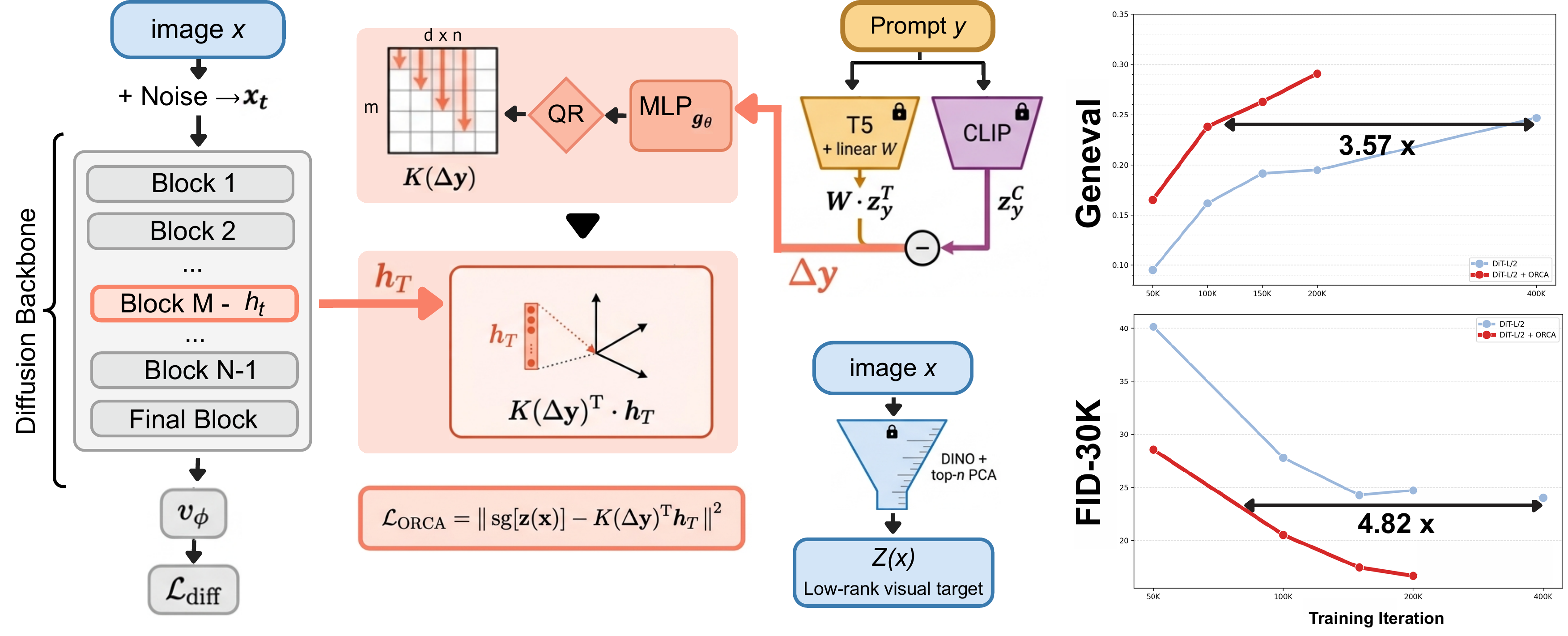}
\caption{{ORCA aligns the diffusion latent with a low-rank visual
target via a text-conditioned projector $K(\Delta y)$ built from the
T5$-$CLIP residual.} On DiT-L/2, ORCA reaches the vanilla $400\text{K}$
FID and GenEval in $\mathbf{4.8}{\times}$ and $\mathbf{3.6}{\times}$ fewer
steps, with zero inference overhead.}
\label{fig:demo_fig}
\end{figure}

\section{Introduction}
\label{sec:introduction}

Text-to-image diffusion models generate photorealistic images at scale, yet they fail predictably on prompts that require compositional reasoning. The fox is green when it should be red; the cat is on the mat instead of beside it; one of three apples is missing. These are not rendering failures, since the individual objects are crisp and well-formed, but conditioning failures, and they persist across model generations, architectures, and training scales~\citep{ ghosh2023geneval, compalign2025, infinitybeyond2025}.

A growing body of work has localised the source in the text encoder. CLIP's text encoder~\citep{radford2021learning}, the dominant conditioning signal in modern text-to-image models, behaves cross-modally as a bag-of-words: it captures which concepts appear in a caption but loses how those concepts are bound to one another~\citep{lewis2024doesclip}. Recent analyses trace this not to CLIP's training data but to the contrastive objective itself, which rewards alignment of shared content between modalities and provides no pressure to encode the syntactic structure that compositional generation requires~\citep{zarei2024improving,magnet2024}. State-of-the-art models including Stable Diffusion 3~\citep{esser2024scaling} and FLUX~\citep{{flux2024}} respond by augmenting CLIP with a T5 encoder~\citep{raffel2020exploring}, precisely because T5's language-modeling objective preserves the compositional structure CLIP discards.\looseness=-1

A separate line of work suggests where such grounding could come from. Diffusion transformers improve substantially when their internal representations are aligned with self-supervised visual encoders. REPA~\citep{yu2024representation} regularises diffusion hidden states toward DINOv2~\citep{oquab2024dinov2} features and reports more than $17\times$ faster convergence on class-conditional ImageNet; REG~\citep{reg2025} extends this to high-level class tokens; subsequent analysis identifies the spatial structure of self-supervised features as the source of the gain~\citep{wang2025whatmatters}. These methods supply the visual grounding signal the diffusion objective lacks, but they have been developed for class-conditional generation, where the conditioning is a single label and the binding problem does not arise. Whether the same mechanism addresses \emph{text-image} binding, where the conditioning is a structured sequence whose compositional content must be respected, remains open.


We propose ORCA, which closes this gap through three coupled observations. (1) The textual signal that should drive the alignment is not T5 itself but the residual between T5 and CLIP, the component of T5 not linearly expressible in CLIP's embedding space, which concentrates the compositional structure the contrastive encoder discards. (2) The visual signal the diffusion model should ground against is concentrated in a low-rank subspace of self-supervised feature spaces. (3) The alignment should be supplied at an intermediate diffusion state and used purely as a training signal, so the inference pipeline is unchanged.

The mechanism is a small predictor whose orthogonal basis is parameterized by the residual textual signal and which projects the diffusion latent into a low-rank subspace of self-supervised visual features for supervision. We prove that the cross-modal information recoverable through this mechanism at a given rank is bounded by the spectral mass of the visual encoder's covariance in the top components, and observe empirically that downstream performance saturates at the spectral elbow, a falsifiable prediction the theory makes and the experiments confirm.


\paragraph{Contributions.} We make the following contributions.
\begin{itemize}
    \item We formalize the diagnosis of compositional failure as a problem of misalignment rather than missing information: modern T2I systems already condition on T5, but the diffusion objective does not provide a training signal that grounds T5's compositional structure in visual composition.
    \item We propose \textsc{Orca}, a training-time auxiliary loss that supplies this grounding signal by aligning the diffusion model's intermediate latent with a low-rank target derived from self-supervised visual features, with the readout subspace parameterized by the T5--CLIP residual.
    \item We prove a spectral upper bound on the cross-modal information \textsc{Orca} can recover at a given rank, and show this bound is tight in expectation under the power-law spectra that self-supervised visual features empirically exhibit.
    \item We evaluate \textsc{Orca} against MMDiT, REPA, and REG baselines primarily on FID and GenEval~\citep{ghosh2023geneval}, and report PickScore~\citep{kirstain2023pickscore}, and CLIPScore~\citep{hessel2021clipscore} in auxiliary analyses with the largest gains on attribute binding, spatial relations, and multi-object prompts.
\end{itemize}

\section{Preliminaries}
\label{sec:background}
 
We present a brief overview of flow and diffusion based generative models through the unified perspective of stochastic interpolants~\citep{albergo2023stochastic, ma2024sit}; see Appendix~\ref{app:background} for additional details. We then specialize to the rectified-flow setting used by MMDiT~\citep{esser2024scaling}.
 
We consider a continuous time-dependent process between a data sample $x^* \sim p_{\mathrm{data}}(x)$ and Gaussian noise $\epsilon \sim \mathcal{N}(0, I)$ on $t \in [0, T]$:
\begin{equation}
x_t \;=\; \alpha_t\, x^* \;+\; \sigma_t\, \epsilon, \qquad \alpha_0 = \sigma_T = 1,\; \alpha_T = \sigma_0 = 0,
\label{eq:interpolant}
\end{equation}
where $\alpha_t$ and $\sigma_t$ are decreasing and increasing functions of $t$, respectively. Associated with~\eqref{eq:interpolant} is a probability-flow ordinary differential equation $\dot x_t = v(x_t, t)$ whose marginal at time $t$ matches $p_t(x)$, so that data can be sampled by integrating this ODE from $\epsilon \sim \mathcal{N}(0,I)$ at $t = T$ down to $t = 0$. The velocity field admits the closed form
\begin{equation}
v(x, t) \;=\; \E[\dot x_t \mid x_t = x] \;=\; \dot \alpha_t\, \E[x^* \mid x_t = x] \;+\; \dot \sigma_t\, \E[\epsilon \mid x_t = x],
\end{equation}
and is approximated by a neural network $v_\theta(x_t, t, c)$ conditioned on text~$c$, trained with the velocity-matching objective
\begin{equation}
\mathcal{L}_{\mathrm{diff}}(\theta) \;:=\; \E_{x^*,\, \epsilon,\, t}\,\bigl\|\, v_\theta(x_t, t, c) \;-\; \dot\alpha_t\, x^* \;-\; \dot\sigma_t\, \epsilon\,\bigr\|^2.
\label{eq:velocity_loss}
\end{equation}
 
To reduce computational cost we operate in the latent space of a pretrained autoencoder~\citep{rombach2022high}, treating $x^* = \mathcal{E}(\mathrm{image})$ as the data variable above; the autoencoder is frozen.
 
Following ~\citep{ma2024sit, esser2024scaling}, we adopt the linear interpolant with $T = 1$, $\alpha_t = 1 - t$, $\sigma_t = t$, which reduces~\eqref{eq:interpolant} to $x_t = (1-t)\,x^* + t\,\epsilon$ and the target velocity to $\dot\alpha_t x^* + \dot\sigma_t \epsilon = \epsilon - x^*$. The framework we develop applies to any other choice of $\alpha_t, \sigma_t$ for which~\eqref{eq:velocity_loss} is well defined (e.g.\ DDPM~\citep{ho2020denoising}).
 
\paragraph{Text conditioning in MMDiT.}
The conditioning signal $c$ in MMDiT is provided by frozen pretrained text encoders. Let $z_y^C \in \R^{d_C}$ denote a CLIP~\citep{radford2021learning} text embedding and $z_y^T \in \R^{d_T}$ a T5~\citep{raffel2020exploring} text embedding of a prompt $y$, and write $c = (z_y^C, z_y^T)$. MMDiT processes text and image tokens with two separate sets of weights joined by full sequence-level attention, producing hidden states $h_t^{(\ell)}$ at each block index $\ell$. We denote by $h_T \in \R^d$ the hidden state at a diffusion block, which our method (Section~\ref{sec:method}) supervises against an external visual target.

\section{Method}
\label{sec:method}


We now describe ORCA, a training-time auxiliary objective that supplies the diffusion model with a grounding signal for compositional structure. Given an MMDiT diffusion model conditioned on CLIP and T5 text embeddings as in Section~\ref{sec:background}, ORCA introduces three components that together construct an auxiliary loss applied to the diffusion model's latent $h_T$. The components are: a residual textual signal (\S\ref{sec:method:residual}) that isolates the part of T5 not expressible in CLIP, a low-rank visual target (\S\ref{sec:method:target}) derived from a self-supervised visual encoder, and a QR map (\S\ref{sec:method:qr}) that produces an input-dependent orthonormal projection from $h_T$ to the visual target space. The auxiliary loss and full training objective are given in \S\ref{sec:method:objective}; the theoretical analysis follows in \S\ref{sec:method:theory}. Inference is unchanged from the MMDiT baseline (\S\ref{sec:method:inference}). Throughout, we denote the diffusion model's parameters by $\theta$ and ORCA's auxiliary parameters (the textual projection and the QR map) by $\phi$.

\subsection{Residual textual signal}
\label{sec:method:residual}

The first component isolates the textual information that compositional generation requires but contrastive alignment does not preserve. Given the CLIP text embedding $z_y^C \in \R^{d_C}$ and the T5 text embedding $z_y^T \in \R^{d_T}$, we introduce a single learnable linear projection $W \in \R^{d_C \times d_T}$ from T5 to CLIP's embedding space, and define the residual textual signal as
\begin{equation}
\Delta_y \;=\; W\, z_y^T \;-\; z_y^C \;\in\; \R^{d_C}.
\label{eq:delta_y}
\end{equation}
The projection $W$ is trained jointly with the rest of ORCA through the auxiliary objective; we apply no explicit regulariser to it. A trivial solution in which $W$ is chosen so that $W z_y^T \approx z_y^C$ (driving $\Delta_y$ towards zero on the training distribution) leaves the predictor downstream with no input signal and is therefore not an attractor of the loss; we observe this empirically in Section~\ref{sec:experiments}. We use $\Delta_y$ as a learned difference signal in CLIP's coordinate frame,
not as an orthogonal complement of CLIP. This signal serves as the input that
determines the geometry of ORCA's predictor and empirically induces
prompt-dependent readout subspaces that vary systematically with semantic
changes in the prompt.

\subsection{Low-rank visual target}
\label{sec:method:target}

The second component supplies the diffusion model with a target representation of the clean image, structured to match the spectral concentration of self-supervised visual features. Let $E_D$ be a frozen pretrained self-supervised visual encoder (DINOv2~\citep{oquab2024dinov2} in our experiments), producing features $v = E_D(x) \in \R^{d_D}$ for a clean image $x$. Before training, we compute the empirical mean $\bar v$ and the top-$n$ principal components of $\{v_i\}_{i=1}^N$ on the training corpus, and assemble the principal-component matrix $P_n \in \R^{n \times d_D}$. The visual target is the projection
\begin{equation}
z(x) \;=\; P_n\bigl(E_D(x) - \bar v\bigr) \;\in\; \R^{n}.
\label{eq:target}
\end{equation}
Both $P_n$ and $\bar v$ are frozen for the duration of training and receive no gradient. The bottleneck rank $n$ is small in practice ($n \ll d_D$); the spectral analysis of \S\ref{sec:method:theory} characterises how performance depends on this choice and predicts saturation at the spectral elbow of the visual encoder's covariance.

The construction has three properties relevant to what follows. First, the target is decorrelated and variance-ordered by spectral theorem: $\mathrm{Cov}(z(x)) = \mathrm{diag}(\lambda_1, \ldots, \lambda_n)$ where $\lambda_1 \geq \cdots \geq \lambda_n$ are the top-$n$ eigenvalues of the empirical covariance of DINO features. Second, the target has no learnable parameters, which removes representational collapse on the target side as a stationary point of optimisation without recourse to variance regularisers~\citep{bardes2022vicreg} or asymmetric stop-gradient schemes~\citep{grill2020byol}. Third, the variance of each target dimension is non-zero by construction, providing a stable supervision signal across the full bottleneck.

\subsection{QR map}
\label{sec:method:qr}

\begin{figure}
    \centering
    \includegraphics[width=\linewidth]{figures/text_demo.pdf}
    \caption{\textbf{(a)} Mean principal angle (in degrees) between the subspaces induced by paired prompts that differ along a controlled axis. Pairs are grouped by perturbation type: color attribute swap, object change, relation addition, and multi-axis change. The angles increase systematically with the semantic distance between prompts, with multi-axis changes producing the largest deviations and color swaps the smallest. The QR map is therefore input-dependent in a structured way: linguistically related prompts induce nearby readout subspaces, while linguistically distant prompts induce more separated ones.
  \textbf{(b)} Geometric illustration of the orthogonal decomposition of Equation~\eqref{eq:decomposition} with rank $r = 64$ and $d = 768$.}
    \label{fig:decomposition}
\end{figure}

The third component projects the diffusion model's latent $h_T \in \R^d$ into the $n$-dimensional target space, with the projection's geometry determined by the residual textual signal $\Delta_y$. We parameterise this projection as the orthonormalised output of a small MLP applied to $\Delta_y$. Concretely, an MLP $g_\phi : \R^{d_C} \to \R^{d \times n}$ produces a candidate matrix from the residual signal, and we orthonormalise its columns to obtain
\begin{equation}
K(\Delta_y) \;=\; \mathrm{QR}\bigl(g_\phi(\Delta_y)\bigr) \;\in\; \R^{d \times n}, \qquad K(\Delta_y)^\top K(\Delta_y) = I_n.
\label{eq:qr}
\end{equation}
We refer to this construction as the \emph{QR map}. The orthonormality constraint $K(\Delta_y)^\top K(\Delta_y) = I_n$ guarantees that the predicted target is a Euclidean projection of $h_T$ onto an $n$-dimensional subspace of $\R^d$ rather than an arbitrary linear combination, which keeps the auxiliary loss interpretable as a subspace alignment and stabilises the training dynamics. In practice we use Householder QR (\texttt{torch.linalg.qr}) with column normalisation and a small additive noise applied to the candidate matrix before the decomposition; implementation details, mixed-precision handling, and empirical orthonormality measurements are given in Appendix~\ref{app:qr_details}.

The predicted target is the projection of $h_T$ onto $K(\Delta_y)$:
\begin{equation}
\hat z(h_T, \Delta_y) \;=\; K(\Delta_y)^\top h_T \;\in\; \R^n.
\label{eq:prediction}
\end{equation}
The factorisation $\hat z = K(\Delta_y)^\top h_T$ separates the role of the text from the role of the image: the text determines \emph{which subspace of $\R^d$ to read out}, and the image fills in coordinates within that subspace. In the following section, we compare this factorised predictor against an unfactorised part.
\subsection{The factorisation as an orthogonal decomposition of $h_T$}
\label{sec:method:factorisation}
 
The factorised form of the predictor in Equation~\eqref{eq:prediction} induces a natural decomposition of the diffusion model's intermediate latent that clarifies what ORCA's auxiliary loss does and does not constrain. Define the rank-$n$ orthogonal projector
\begin{equation}
P_{\Delta_y} \;:=\; K(\Delta_y)\, K(\Delta_y)^\top \;\in\; \R^{d \times d},
\label{eq:projector}
\end{equation}
which by construction satisfies $P_{\Delta_y}^2 = P_{\Delta_y}$, $P_{\Delta_y}^\top = P_{\Delta_y}$, and $\mathrm{rank}(P_{\Delta_y}) = n$. The latent $h_T$ admits the orthogonal decomposition
\begin{equation}
h_T \;=\; \underbrace{P_{\Delta_y}\, h_T}_{\,h_T^\parallel\,} \;+\; \underbrace{(I - P_{\Delta_y})\, h_T}_{\,h_T^\perp\,},
\label{eq:decomposition}
\end{equation}
where $h_T^\parallel$ lies in the $n$-dimensional subspace selected by the QR map for the prompt $y$, and $h_T^\perp$ lies in its orthogonal complement.
 
\paragraph{ORCA constrains only $h_T^\parallel$.}
A direct calculation shows
\begin{equation}
K(\Delta_y)^\top h_T \;=\; K(\Delta_y)^\top P_{\Delta_y} h_T \;=\; K(\Delta_y)^\top h_T^\parallel,
\end{equation}
because $K^\top (I - P_{\Delta_y}) = 0$ by orthonormality of $K$. The auxiliary loss in Equation~\eqref{eq:loss} is therefore a function of $h_T^\parallel$ alone:
\begin{equation}
\mathcal{L}_{\mathrm{ORCA}} \;=\; \E\!\bigl\|\,\mathrm{sg}[z(x)] \;-\; K(\Delta_y)^\top h_T^\parallel\,\bigr\|^2.
\end{equation}
Equivalently, $\mathcal{L}_{\mathrm{ORCA}}$ is invariant to any change in $h_T^\perp$ as shown in Figure \ref{fig:decomposition}. The decomposition in Equation~\eqref{eq:decomposition} is the unique (up to the choice of subspace, which is fixed by $K(\Delta_y)$) decomposition of $h_T$ that separates the component the auxiliary loss can shape from the component it cannot.  
 
\subsection{Auxiliary loss and training objective}
\label{sec:method:objective}

The auxiliary loss is the squared error between the predicted target and the frozen visual target, with the target detached from the computation graph:
\begin{equation}
\mathcal{L}_{\mathrm{ORCA}}(\theta, \phi) \;=\; \E_{(x, y) \sim p_{\mathrm{data}}, \, t \sim p_t}\!\left[\,\bigl\|\,\mathrm{sg}[z(x)] \;-\; K_\phi(\Delta_y)^\top h_T(x, y; \theta)\,\bigr\|^2\,\right],
\label{eq:loss}
\end{equation}
where $h_T(x, y; \theta)$ is MMDiT's latent for noisy input at time $t$, the stop-gradient $\mathrm{sg}[\cdot]$ blocks gradients into the visual encoder, and $K_\phi$ makes the dependence on $\phi$ explicit. The total objective combines the diffusion loss of Equation~\eqref{eq:velocity_loss} with the auxiliary loss:
\begin{equation}
\mathcal{L}(\theta, \phi) \;=\; \mathcal{L}_{\mathrm{diff}}(\theta) \;+\; \lambda \, \mathcal{L}_{\mathrm{ORCA}}(\theta, \phi),
\label{eq:total_loss}
\end{equation}
where $\lambda > 0$ is a single scalar hyperparameter. We find performance to be robust over approximately one order of magnitude of $\lambda$; see Section~\ref{sec:experiments} and Appendix~\ref{app:additional:lambda}.

\subsection{Inference}
\label{sec:method:inference}

At inference the auxiliary parameters $\phi$ and the visual target $z(x)$ are not used. The diffusion model is sampled exactly as in the MMDiT baseline, integrating the velocity field $v_\theta(x_t, t, c)$ from $t=1$ to $t=0$ with classifier-free guidance as in Appendix~\ref{app:background:cfg}. ORCA therefore adds zero parameters, zero memory, and zero FLOPs to the inference pipeline.

\subsection{Theoretical analysis}
\label{sec:method:theory}

We now formalise what ORCA recovers and why a small rank $n$ is sufficient in practice. The argument has three steps. First, we establish a precise upper bound on the cross-modal information recoverable through the auxiliary loss at a given rank, in terms of the visual encoder's covariance spectrum (Theorem~\ref{thm:spectral-bound}). Second, we observe that self-supervised visual features on natural images have power-law spectra. Third, combining the two yields Proposition~\ref{prop:powerlaw}: ORCA at modest rank recovers a constant fraction of all available cross-modal information, with the residual decaying polynomially in $n$. Together they predict that downstream performance saturates near the rank where DINO's cumulative spectral mass saturates, a prediction we verify empirically in Section~\ref{sec:experiments}. Full proofs are in Appendix~\ref{app:proofs}.

\paragraph{Setup.}
Let $V = E_D(X) \in \R^{d_D}$ denote the DINO feature of a random image $X$, with mean $\bar v$ and covariance $\Sigma_V = \sum_{i=1}^{d_D} \lambda_i u_i u_i^\top$ ordered by $\lambda_1 \geq \cdots \geq 0$. Let $Z = P_n(V - \bar v) \in \R^n$ be the visual target and $H \in \R^d$ the diffusion model's latent. Let $\hat Z(H)$ be any predictor of $Z$ from $H$.

\begin{theorem}[Spectral bound on recoverable information]
\label{thm:spectral-bound}
The maximum reduction in expected reconstruction error of $V$ achievable by any predictor $\hat Z(H)$ in ORCA's auxiliary objective is bounded above by the top-$n$ spectral mass of $\Sigma_V$:
\begin{equation}
\E\|V - \bar v\|^2 \;-\; \min_{\hat Z}\, \E\bigl\|\,V - \bar v - P_n^\top \hat Z(H)\,\bigr\|^2 \;\leq\; \sum_{i=1}^{n} \lambda_i,
\end{equation}
with equality when $\E[Z \mid H] = Z$ almost surely.
\end{theorem}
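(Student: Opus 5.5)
The approach is to write $U := V - \bar v$ and split it orthogonally with respect to the row space of $P_n$. By construction the rows of $P_n$ are the orthonormal eigenvectors $u_1,\dots,u_n$, so $P_n P_n^\top = I_n$ and $\Pi_n := P_n^\top P_n$ is the orthogonal projector onto $\mathrm{span}(u_1,\dots,u_n)$. This gives $U = \Pi_n U + (I-\Pi_n)U = P_n^\top Z + (I-\Pi_n)U$. For any predictor, $P_n^\top \hat Z(H)$ lies in the range of $\Pi_n$. Pythagoras then gives, pointwise,
\begin{equation}
\|U - P_n^\top \hat Z(H)\|^2 \;=\; \|P_n^\top (Z - \hat Z(H))\|^2 + \|(I-\Pi_n)U\|^2 \;=\; \|Z - \hat Z(H)\|^2 + \|(I-\Pi_n)U\|^2,
\end{equation}
where the last step uses the fact that $P_n^\top$ is an isometry. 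So the $V$-reconstruction objective equals the ORCA objective plus a term independent of $\hat Z$. This justifies calling it "ORCA's auxiliary objective".

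Next, take expectations. Since $\E[Z]=0$ and $\mathrm{Cov}(Z) = P_n \Sigma_V P_n^\top = \mathrm{diag}(\lambda_1,\dots,\lambda_n)$, we get $\E\|(I-\Pi_n)U\|^2 = \mathrm{tr}((I-\Pi_n)\Sigma_V) = \sum_{i>n}\lambda_i$ and $\E\|U\|^2 = \mathrm{tr}\,\Sigma_V = \sum_i \lambda_i$. Hence the reduction equals
\begin{equation}
\sum_{i=1}^n \lambda_i \;-\; \min_{\hat Z}\E\|Z - \hat Z(H)\|^2 .
\end{equation}
The inequality follows immediately because the minimum is nonnegative.

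For the minimiser, the standard $L^2$ projection argument applies. Over measurable $\hat Z$ with finite second moment, the optimum is $\hat Z^\star(H) = \E[Z\mid H]$, with value $\E\|Z - \E[Z\mid H]\|^2 = \E\,\mathrm{tr}\,\mathrm{Cov}(Z\mid H)$. This value vanishes exactly when $\E[Z\mid H] = Z$ almost surely, which gives the equality case. Under that condition equality holds, matching the statement's ``with equality when'' direction; the converse also follows from the identity. If the paper intends $\hat Z$ to be restricted to the linear family $K(\Delta_y)^\top H$, the bound is unaffected: the restricted minimum is at least the unrestricted one, so the reduction can only be smaller. Equality in that case additionally requires $Z$ to be exactly linearly readable, and I would note this as a remark.

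There is no real obstacle. The only points to be careful about are:
\begin{itemize}
\item the assumption $P_nP_n^\top = I_n$, which holds because PCA components are orthonormal;
\item the finiteness of second moments, which makes the conditional expectation well defined.
\end{itemize}
I would also state explicitly that "reduction in reconstruction error" is measured against the trivial predictor $\hat Z\equiv 0$, which is $\E\|U\|^2$.
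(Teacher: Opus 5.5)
Your proof is correct and follows the paper's argument essentially step for step. It uses the same three ingredients: the Pythagorean split of $V-\bar v$ along the projector $P_n^\top P_n$, the trace identity $\mathrm{tr}\bigl[(I-P_n^\top P_n)\Sigma_V\bigr]=\sum_{i>n}\lambda_i$ for the $\hat Z$-independent term, and the MMSE characterisation via $\E[Z\mid H]$ for the equality case. Your side remarks go slightly beyond the paper and are accurate: the isometry $P_nP_n^\top=I_n$, the finite-second-moment requirement, the converse of the equality condition, and the observation that restricting to the linear family $K(\Delta_y)^\top H$ can only shrink the reduction.
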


\begin{proposition}[Power-law concentration]
\label{prop:powerlaw}
Suppose $\lambda_i \leq C\, i^{-\alpha}$ for some $\alpha > 1$ and $C > 0$. Then the missing spectral mass beyond rank $n$ decays polynomially:
\begin{equation}
\sum_{i=n+1}^{\infty} \lambda_i \;\leq\; \frac{C}{(\alpha-1)\,n^{\alpha-1}}.
\end{equation}
Consequently the gap between recoverable information at rank $n$ and the total recoverable information is $O\bigl(n^{-(\alpha-1)}\bigr)$.
\end{proposition}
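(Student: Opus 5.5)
The statement immediately above is Proposition~\ref{prop:powerlaw}, and the plan is an integral comparison for the tail sum followed by a combination with Theorem~\ref{thm:spectral-bound}.

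\textbf{Tail bound.} Since $x \mapsto x^{-\alpha}$ is positive and decreasing on $(0,\infty)$, for every $i \geq 2$ we have
\[
i^{-\alpha} \leq \int_{i-1}^{i} x^{-\alpha}\,dx .
\]
Summing over $i = n+1, n+2, \ldots$ and using $\lambda_i \leq C\, i^{-\alpha}$ gives
\[
\sum_{i=n+1}^{\infty} \lambda_i \;\leq\; C \int_{n}^{\infty} x^{-\alpha}\,dx \;=\; \frac{C}{(\alpha-1)\,n^{\alpha-1}},
\]
and the integral converges because $\alpha > 1$. When $d_D$ is finite, we set $\lambda_i = 0$ for $i > d_D$. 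The hypothesis still holds for these indices, so the same bound applies verbatim.

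\textbf{Interpreting the consequence.} Take the ``total recoverable information'' to be the same quantity as in Theorem~\ref{thm:spectral-bound} with $n$ replaced by full rank ($n = d_D$, or $n \to \infty$). By that theorem, the rank-$n$ recoverable reduction is at most $\sum_{i \leq n} \lambda_i$, with equality in the ideal case $\E[Z \mid H] = Z$. Under that same ideal condition at full rank, the total is $\sum_i \lambda_i = \mathrm{tr}\,\Sigma_V$. The gap between them is therefore the tail $\sum_{i>n} \lambda_i$, which is $O\bigl(n^{-(\alpha-1)}\bigr)$ by the tail bound.

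\textbf{Main obstacle.} The only delicate point is this interpretive step, since away from the ideal case the theorem supplies only an upper bound, not an identity. I would handle it by stating the gap as a comparison of the two spectral upper bounds, rank $n$ against full rank. That difference is exactly the tail sum regardless of the predictor, so the $O\bigl(n^{-(\alpha-1)}\bigr)$ conclusion holds without further assumptions. I would also note explicitly that, as a fraction of $\mathrm{tr}\,\Sigma_V$, the same bound controls the relative gap whenever $\mathrm{tr}\,\Sigma_V > 0$.
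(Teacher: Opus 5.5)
Your proposal is correct and follows the paper's proof: the paper applies the integral test $\sum_{i>n} i^{-\alpha} \le \int_n^\infty x^{-\alpha}\,dx$ and multiplies by $C$, and it leaves the ``consequently'' clause implicit, so your explicit treatment of that clause goes beyond what the paper writes. You also do not need to fall back on comparing the two upper bounds: with MMSE predictors, the actual gap between the full-rank and rank-$n$ recoverable reductions equals $\sum_{i>n}\lambda_i - \E\|(I-P_n^\top P_n)(V-\E[V\mid H])\|^2$, which lies in $[0,\sum_{i>n}\lambda_i]$ with no ideal-case assumption.
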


Self-supervised visual features on natural images empirically follow power laws of the form $\lambda_i \approx C\, i^{-\alpha}$ with $\alpha > 1$~\citep{ruderman1994statistics, bahri2024explaining, kaplan2020scaling, maloney2022solvable}, and we verify in Section~\ref{sec:experiments} that DINOv2 features on our training corpus exhibit this behaviour. Together, Theorem~\ref{thm:spectral-bound} and Proposition~\ref{prop:powerlaw} predict that downstream performance saturates near the rank where the cumulative spectral mass saturates, the spectral elbow of the eigenvalue curve. This characterises ORCA's capacity-rank trade-off and explains why a method that operates in the principal subspace of the visual encoder is matched to the geometry of natural-image data.

\definecolor{rowREPA}{HTML}{FCE4E4}   
\definecolor{rowREG}{HTML}{FBE8D2}    
\definecolor{rowORCA}{HTML}{E5F4E0}   
\definecolor{groupSep}{HTML}{B8B8B8}  

\begin{table*}[t]
\centering
\caption{\textbf{Main results on MS-COCO $256{\times}256$.}
Results are grouped by diffusion backbone. For each backbone, we report the
training checkpoint, FID-30K, and GenEval. $^\dagger$ORCA adds zero parameters
and FLOPs at inference. Best per backbone in bold.\quad
\colorbox{rowREPA}{\strut\,REPA\,}\;%
\colorbox{rowREG}{\strut\,REG\,}\;%
\colorbox{rowORCA}{\strut\,ORCA\,(ours)\,}}
\label{tab:main}

\footnotesize
\setlength{\tabcolsep}{4.2pt}
\renewcommand{\arraystretch}{1.20}

\begin{tabular*}{\textwidth}{@{\extracolsep{\fill}}
l
ccc
@{\hspace{5pt}}!{\color{groupSep}\vrule width 0.35pt}@{\hspace{5pt}}
ccc
@{\hspace{5pt}}!{\color{groupSep}\vrule width 0.35pt}@{\hspace{5pt}}
ccc
@{}}
\toprule
&
\multicolumn{3}{c}{\textbf{DiT-B/2} $(130\mathrm{M})$} &
\multicolumn{3}{c}{\textbf{DiT-L/2} $(458\mathrm{M})$} &
\multicolumn{3}{c}{\textbf{U-ViT-L} $(287\mathrm{M})$} \\
\cmidrule(lr){2-4}
\cmidrule(lr){5-7}
\cmidrule(lr){8-10}
\textbf{Setting}
& \textbf{Iter.} & \textbf{FID}$\downarrow$ & \textbf{GenEval}$\uparrow$
& \textbf{Iter.} & \textbf{FID}$\downarrow$ & \textbf{GenEval}$\uparrow$
& \textbf{Iter.} & \textbf{FID}$\downarrow$ & \textbf{GenEval}$\uparrow$ \\
\midrule

\multicolumn{10}{@{}l}{\emph{Vanilla diffusion training}} \\
\addlinespace[1pt]
Vanilla
& 100K & 35.49 & 0.123
& 100K & 27.77 & 0.162
& 100K & 38.72 & 0.130 \\
Vanilla
& 150K & 33.55 & 0.145
& 150K & 24.26 & 0.191
& 150K & 32.48 & 0.148 \\
Vanilla
& 200K & 29.95 & 0.167
& 200K & 24.71 & 0.195
& 200K & 30.95 & 0.161 \\
Vanilla
& 400K & 29.15 & 0.174
& 400K & 24.01 & 0.247
& 400K & 29.91 & 0.189 \\

\addlinespace[2pt]
\midrule
\multicolumn{10}{@{}l}{\emph{Representation-alignment baselines}} \\
\addlinespace[1pt]
\rowcolor{rowREPA}
$+$ REPA
& 250K & 23.85 & 0.214
& 400K & 20.05 & 0.275
& 400K & 23.31 & 0.221 \\
\rowcolor{rowREG}
$+$ REG
& 200K & 21.92 & 0.217
& 200K & 18.57 & 0.258
& 200K & 20.21 & 0.219 \\

\addlinespace[2pt]
\midrule
\multicolumn{10}{@{}l}{\emph{ORCA}} \\
\addlinespace[1pt]
\rowcolor{rowORCA}
$+$ \textbf{ORCA} (ours)$^\dagger$
& 100K & 28.31 & 0.180
& 100K & 20.51 & 0.238
& 100K & 25.47 & 0.218 \\
\rowcolor{rowORCA}
$+$ \textbf{ORCA} (ours)$^\dagger$
& 150K & 26.58 & 0.206
& 150K & 16.76 & 0.272
& 150K & 21.10 & 0.260 \\
\rowcolor{rowORCA}
$+$ \textbf{ORCA} (ours)$^\dagger$
& 250K & \textbf{21.06} & \textbf{0.265}
& 200K & \textbf{16.65} & \textbf{0.291}
& 200K & \textbf{19.62} & \textbf{0.271} \\

\bottomrule
\end{tabular*}
\end{table*}

\section{Experiments}
\label{sec:experiments}

We evaluate ORCA against vanilla, REPA, and REG baselines, organising the
experiments around three questions. \textbf{Q1:} does ORCA improve
generation quality and compositional fidelity, and does it accelerate
training (\S\ref{sec:exp:main})? \textbf{Q2:} does the
factorisation story of \S\ref{sec:method:factorisation} hold
empirically (App. \S\ref{app:additional:projection})? \textbf{Q3:} which
design choices drive the gains (\S\ref{sec:exp:ablations})?
Implementation details, hyperparameters, and the runtime breakdown
are deferred to Appendix~\ref{app:architecture}; extended ablations
to Appendix~\ref{app:additional}.

\subsection{Setup}
\label{sec:exp:setup}
 
\paragraph{Dataset.}
We train on MS-COCO~\citep{lin2014coco} at $256{\times}256$ following
the standard text-to-image protocol used in prior
representation-alignment work~\citep{yu2024representation}. All
images are encoded into the latent space of the SD-1.5 VAE before
training, and captions are tokenised separately for CLIP and T5. The
PCA basis used for the visual target is estimated once on a fixed
subset of $5{,}000$ training images ($\sim\!1.28\text{M}$ patch
embeddings, sub-sampled to $100\text{K}$ for SVD efficiency) and
frozen for the remainder of training; rank $r{=}64$ captures $37.3\%$
of the cumulative spectral mass.
 
\paragraph{Architectures.}
We instantiate ORCA on three diffusion-transformer backbones at two
parameter scales to test that the gains do not depend on a specific
design or model size: {DiT-B/2}~\citep{peebles2023scalable}
($\sim\!130\text{M}$), {DiT-L/2}~\citep{peebles2023scalable}
($\sim\!458\text{M}$), and {U-ViT-L}~\citep{bao2023uvit}
($\sim\!287\text{M}$), the latter differs in skip-connection
topology. All backbones use identical conditioning, optimiser, and
schedule settings; the only difference between methods is the
auxiliary loss. Architectural details and the full hyperparameter
table are in Appendix~\ref{app:architecture:hyperparams}.
 
\paragraph{Baselines.}
We compare ORCA against (i) the {vanilla} backbone trained
with the diffusion objective alone, (ii)
{REPA}~\citep{yu2024representation}, and (iii) REG~\citep{reg2025}, the strongest
representation-alignment method we are aware of. All baselines use
the same backbone, optimiser, and schedule as ORCA; the only
difference is the auxiliary loss. For REPA and REG we use the authors'
released configuration. Baseline hyperparameters are reproduced
verbatim in Appendix~\ref{app:architecture:hyperparams}.
 
\paragraph{Metrics.}
We report {FID-30K}~\citep{heusel2017fid} for generation
quality, {GenEval}~\citep{ghosh2023geneval} for fine-grained
compositional alignment, {CLIPScore}~\citep{hessel2021clipscore}
(ViT-L/14) for text--image alignment, and
{PickScore}~\citep{kirstain2023pickscore} as a human-preference
proxy. All metrics use a fixed seed, $50$ DDIM sampling steps, and
CFG scale $s{=}1.5$ unless noted otherwise.
 
\paragraph{Training-step protocol.}
We report each method at the checkpoints shown in Table~\ref{tab:main}.
Vanilla baselines are reported up to $400\mathrm{K}$ steps; REPA and REG use
the released/default configurations at the listed checkpoints. For ORCA, we
report multiple checkpoints to characterise convergence speed. The rank ablation
runs are trained to $150\mathrm{K}$ steps; the layer-placement and $\lambda$
ablations are trained to $50\mathrm{K}$ steps.


\subsection{Main results: backbones and benchmarks}
\label{sec:exp:main}

Table~\ref{tab:main} reports the main DiT-L/2 comparison on our two
primary metrics, FID and GenEval. The pattern is consistent across baselines:
standard diffusion training improves slowly with scale, while representation
alignment substantially accelerates convergence. Vanilla DiT-L/2 reaches
FID $24.01$ and GenEval $0.247$ after $400\mathrm{K}$ steps; REPA ~\citep{yu2024representation} improves
both metrics at the same training budget, reaching FID $20.05$ and GenEval
$0.275$; and REG ~\citep{reg2025} provides a strong recent representation-alignment baseline,
reaching FID $18.57$ and GenEval $0.258$ at $200\mathrm{K}$ steps. ORCA shifts
this trade-off further. At the same $200\mathrm{K}$ budget as REG, ORCA reaches
FID $\mathbf{16.65}$ and GenEval $\mathbf{0.291}$, improving FID by $10.3\%$
and GenEval by $12.8\%$ relative to REG. Compared with the longer $400\mathrm{K}$
Vanilla and REPA runs, ORCA reduces FID by $31.0\%$ and $17.0\%$, respectively,
while improving GenEval by $17.8\%$ and $5.8\%$. Thus, ORCA is not merely
faster to optimise: it improves the quality--composition trade-off, reaching
better compositional fidelity than the strongest $400\mathrm{K}$ baseline while
using half the training budget.

We report CLIPScore and PickScore only as auxiliary metrics (see Appendix~\ref{app:additional}).
Their changes are small across methods, consistent with prior observations that
global preference and text--image similarity scores are less sensitive to the
binding errors measured by GenEval; accordingly, we treat FID and GenEval as the
primary metrics for the main comparison.

Table~\ref{tab:geneval_breakdown} drills into the per-category
structure of the GenEval gain at the matched-iteration setting
($150\text{K}$ for all methods). The improvements concentrate on
the categories that require compositional reasoning rather than
single-object photorealism: {Position}
($\mathbf{2.9\times}$ over Vanilla), {Color attribution}
($\mathbf{3.0\times}$), {Two objects} ($\mathbf{1.6\times}$),
and {Counting} ($\mathbf{1.4\times}$). Single-object
accuracy is already comparatively saturated for all methods and
improves only modestly ($1.2\times$). This distribution of gains
is the empirical signature predicted by our diagnosis: the
bottleneck is not photorealism, it is binding, and ORCA acts on
the latter. The same ordering holds beyond the early checkpoints: at $200\mathrm{K}$ steps, ORCA reaches GenEval $0.291$, outperforming Vanilla at $400\mathrm{K}$ ($0.247$), REPA at $400\mathrm{K}$ ($0.275$), and REG at $200\mathrm{K}$ ($0.258$).

\begin{table}[t]
\centering
\caption{\textbf{GenEval per-task breakdown} on DiT-L/2 at
$150\text{K}$ steps for all methods (matched to the rank-ablation
budget). Gains concentrate on tasks that require compositional
reasoning (\textit{Two objects}, \textit{Counting},
\textit{Position}, \textit{Color attribution}); single-object
accuracy is comparatively saturated and improves only modestly.
We additionally report ORCA at multiple ranks $r$ to show the gain
is not idiosyncratic to one rank choice. Best per column in
{bold}.}
\label{tab:geneval_breakdown}
\small
\setlength{\tabcolsep}{5pt}
\renewcommand{\arraystretch}{1.05}
\begin{tabular}{l c c c c c c c}
\toprule
\textbf{Method} & \textbf{Single} & \textbf{Two} & \textbf{Count}
& \textbf{Colors} & \textbf{Position} & \textbf{Color attr.} & \textbf{Overall} \\
\midrule
Vanilla         & 0.516 & 0.038 & 0.116 & 0.431 & 0.025 & 0.023 & 0.191 \\
REPA            & 0.638 & 0.040 & 0.116 & 0.620 & 0.050 & 0.030 & 0.249 \\
\midrule
ORCA ($r{=}16$) & 0.622 & 0.043 & 0.119 & 0.588 & 0.043 & 0.038 & 0.242 \\
ORCA ($r{=}32$) & 0.613 & \textbf{0.083} & 0.181 & 0.582 & 0.050 & \textbf{0.068} & 0.263 \\
ORCA ($r{=}64$) & 0.634 & 0.061 & 0.159 & 0.638 & \textbf{0.073} & \textbf{0.068} & 0.272 \\
ORCA ($r{=}128$) & \textbf{0.669} & 0.068 & \textbf{0.194} & \textbf{0.641} & 0.058 & 0.048 & \textbf{0.279} \\
\bottomrule
\end{tabular}
\end{table}

\section{Ablations}
\label{sec:exp:ablations}

We probe the design choices most central to the method, with the goal
of separating what is doing the work from what is incidental.
Figure~\ref{fig:fid_ablations} summarises the FID sensitivity across
the three primary axes, subspace rank, alignment-block placement,
and auxiliary-loss weight $\lambda$, under DiT-L/2 at $50\text{K}$
(layer, $\lambda$) and $150\text{K}$ (rank) steps. Per-metric tables
(GenEval, CLIP, PickScore), and the
loss/target ablations are deferred to
Appendix~\ref{app:additional}.

\begin{figure}[t]
\centering
\includegraphics[width=\linewidth]{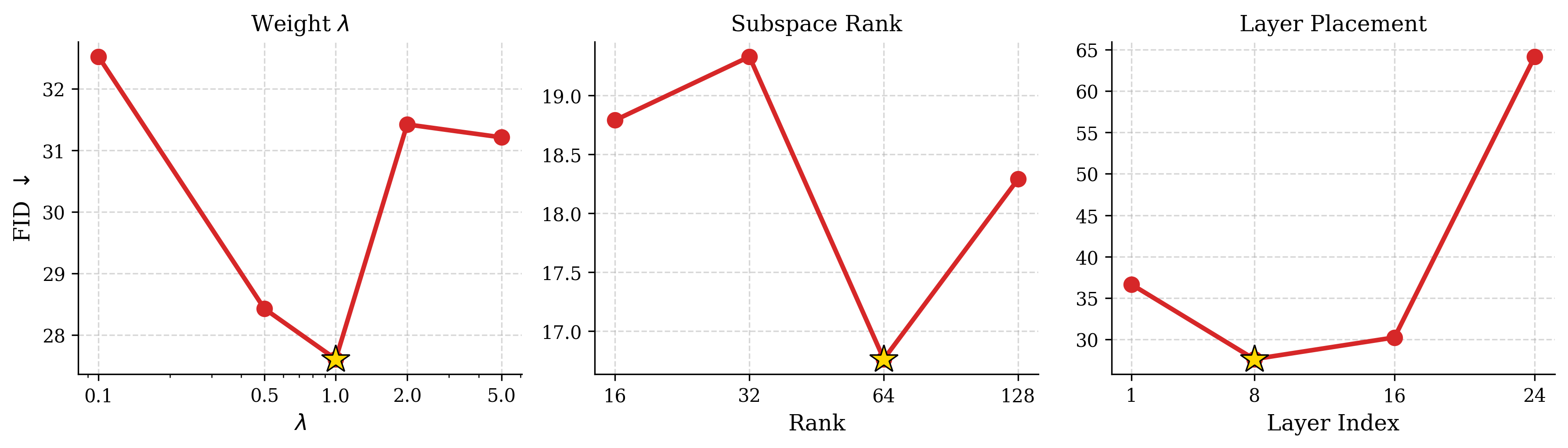}
\caption{\textbf{ORCA design-choice sensitivity (FID-30K).} Each panel
sweeps a single hyperparameter while the other two are held at their
default; stars mark the default configuration adopted in the main
results. \textbf{Left:} the auxiliary-loss weight $\lambda$ exhibits
a clean optimum at $\lambda{=}1.0$, with both smaller and larger
values degrading FID. \textbf{Centre:} the bottleneck rank $r$ is
non-monotonic, with $r{=}64$ as a sharp minimum. \textbf{Right:}
applying the alignment loss at intermediate depth (block~$8$ of
$24$) is strongly preferred; alignment at the final block collapses
the model entirely (FID~$64.16$). Full per-metric tables are in
Appendix~\ref{app:additional}.}
\label{fig:fid_ablations}
\end{figure}

\paragraph{Subspace rank.}
FID is minimised at $r{=}64$ ($16.76$), with both smaller and larger
ranks performing worse. GenEval improves monotonically with $r$ and
is highest at $r{=}128$ ($0.279$), but the accompanying FID
degradation at $r{=}128$ ($18.29$) makes $r{=}64$ a Pareto-efficient
default: it pays a $2.5\%$ relative GenEval cost for an $8.4\%$
relative FID gain over $r{=}128$. CLIP-L/14 and PickScore are
roughly flat across the sweep ($0.1042$--$0.1076$ and
$17.11$--$17.16$ respectively), indicating that the rank choice does
not interact strongly with surface-level image quality. We adopt $r=64$ as the default because it gives the best FID while remaining competitive on GenEval. Per-metric numbers are in Appendix~\ref{app:additional:rank}, Table~\ref{tab:rank}.


\paragraph{Alignment-block placement.}
At $50\text{K}$ steps, performance is U-shaped in the depth at
which the auxiliary loss is applied: intermediate depth (block~$8$
of $24$) outperforms both very early (block~$1$) and very late
(block~$24$) placement, with block~$24$ collapsing entirely
(FID~$64.16$, GenEval~$0.054$). The collapse at the last block
indicates that the representation immediately before the output
head is too task-specific to benefit from the auxiliary signal,
consistent with REPA's observation that self-supervised structure
is most useful where the diffusion model's representations are
still abstract~\citep{yu2024representation}. Full numbers are in
Appendix~\ref{app:additional:layer}, Table~\ref{tab:layer}.

\paragraph{Loss coefficient.}
Performance is robust over approximately one order of magnitude of
$\lambda$ around the chosen default $\lambda{=}1.0$
(Figure~\ref{fig:fid_ablations}, left panel); the curve is U-shaped,
with both very small and very large $\lambda$ degrading FID. Full
numbers are in
Appendix~\ref{app:additional:lambda}, Table~\ref{tab:lambda}.

\section{Conclusion}
\label{sec:conclusion}

We presented \textsc{Orca}, a training-time auxiliary objective for
text-to-image diffusion models that supplies an explicit grounding
signal for compositional structure. \textsc{Orca} aligns the
diffusion latent at an intermediate block with a low-rank principal
subspace of self-supervised visual features, through a
text-conditioned QR map parameterised by the residual between T5 and
CLIP. The auxiliary loss is active only during training and adds
zero parameters and FLOPs at inference. Across three
diffusion-transformer backbones (DiT-B/2, DiT-L/2, U-ViT-L),
\textsc{Orca} improves over vanilla training and strong representation-alignment
baselines, including REPA and REG, on FID and GenEval; on DiT-L/2, it reaches FID $\mathbf{16.65}$ and GenEval
$\mathbf{0.291}$ at $200\text{K}$ steps, exceeding the strongest
$400\text{K}$ baseline at half the training cost. The factorisation analysis shows that ORCA’s auxiliary loss acts only on the prompt-selected subspace of the diffusion latent and is invariant to perturbations in its orthogonal complement. Empirically, the aligned component increasingly matches the lifted DINOv2 PCA target during training, while the residual-target cosine remains zero by construction and serves only as an implementation sanity check, not as evidence that the orthogonal complement contains no visual information. The per-category GenEval breakdown shows that ORCA’s gains concentrate on attribute binding, spatial reasoning, and multi-object prompts, matching the failure modes targeted by our diagnosis.

The framework that emerges from this analysis is more general than
the specific instantiation we evaluate. The factorisation isolates
the textual structure that contrastive alignment loses and supervises
a generative model against a structured low-rank projection of
self-supervised features, a recipe that applies wherever a model
is conditioned on a contrastively-aligned encoder and grounded
against a self-supervised target with a concentrated principal
spectrum. The spectral analysis transfers without modification: the
relevant rank is set by the spectral elbow of the target encoder. We
trained on MS-COCO at $256{\times}256$ rather than web-scale corpora
such as LAION-5B because of GPU constraints; characterising
\textsc{Orca} at scale, extending the alignment to multiple
intermediate blocks, and applying the same factorisation to
text-to-video and text-to-3D generation are natural next directions.

\newpage
\bibliography{neurips_2026}
\bibliographystyle{unsrt}

\clearpage
\appendix

\appendix

\section{Additional Preliminaries}
\label{app:background}

This appendix provides additional detail on the stochastic-interpolant framework summarised in Section~\ref{sec:background}.

\subsection{Probability flow ODE and reverse SDE}
\label{app:background:pfode}

For the interpolant $x_t = \alpha_t x^* + \sigma_t \epsilon$ defined in Equation~\eqref{eq:interpolant}, the marginal $p_t$ at time $t$ is the law of $x_t$ under $x^* \sim p_{\mathrm{data}}$ and $\epsilon \sim \mathcal{N}(0, I)$. There exists a probability-flow ordinary differential equation
\begin{equation}
\dot x_t \;=\; v(x_t, t),
\label{app:eq:pfode}
\end{equation}
whose marginal at time $t$ matches $p_t$~\citep{albergo2023stochastic, ma2024sit}. The velocity field is the conditional expectation
\begin{equation}
v(x, t) \;=\; \E[\dot x_t \mid x_t = x] \;=\; \dot \alpha_t \E[x^* \mid x_t = x] + \dot \sigma_t \E[\epsilon \mid x_t = x],
\label{app:eq:velocity}
\end{equation}
which is the regression target of the velocity-matching loss in Equation~\eqref{eq:velocity_loss}. There also exists a reverse stochastic differential equation
\begin{equation}
d x_t \;=\; v(x_t, t)\, dt \;-\; \tfrac{1}{2} w_t\, s(x_t, t)\, dt \;+\; \sqrt{w_t}\, d\bar w_t,
\end{equation}
with diffusion coefficient $w_t \geq 0$ and score function $s(x, t) = \nabla_x \log p_t(x)$. The score and velocity are related by
\begin{equation}
s(x, t) \;=\; \frac{\alpha_t\, v(x, t) - \dot\alpha_t\, x}{\sigma_t (\dot\alpha_t \sigma_t - \alpha_t \dot\sigma_t)},
\end{equation}
so a learned velocity model induces a score model and vice versa, and either ODE solvers (acting on Equation~\eqref{app:eq:pfode}) or SDE solvers can be used at inference~\citep{ma2024sit}.

\subsection{Specialisation to rectified flow and DDPM}
\label{app:background:specializations}

Setting $\alpha_t = 1 - t$, $\sigma_t = t$ on $[0, 1]$ gives the linear interpolant of rectified flow~\citep{lipman2023flow, esser2024scaling}, with target velocity $\dot\alpha_t x^* + \dot\sigma_t \epsilon = \epsilon - x^*$. Setting $\alpha_t = \sqrt{\bar\alpha_t}$, $\sigma_t = \sqrt{1 - \bar\alpha_t}$ for a discretised cosine or linear schedule $\bar\alpha_t$ recovers the DDPM setting~\citep{ho2020denoising}. ORCA's auxiliary loss (Equation~\eqref{eq:loss}) does not depend on the choice of $\alpha_t, \sigma_t$ and therefore applies in either case.

\subsection{Classifier-free guidance}
\label{app:background:cfg}

At inference, MMDiT applies classifier-free guidance~\citep{ho2022classifierfree} by training the velocity model to predict both the conditional and an unconditional velocity (the unconditional case is obtained by replacing the text embeddings with a learned null token at a fixed dropout rate during training). At sampling time, the guided velocity is
\begin{equation}
\hat v(x_t, t, c) \;=\; v_\theta(x_t, t, \varnothing) + s\bigl(v_\theta(x_t, t, c) - v_\theta(x_t, t, \varnothing)\bigr),
\end{equation}
where $s \geq 1$ is the guidance scale. ORCA does not modify the guidance procedure: the auxiliary loss is applied to $h_T$ obtained from the conditional pass and is not used during the unconditional pass.

\section{Proofs}
\label{app:proofs}

\subsection{Proof of Theorem~\ref{thm:spectral-bound}}
\label{app:proofs:thm1}

We restate the theorem and provide a self-contained proof.

\begin{theorem}[Restated]
Let $V \in \R^{d_D}$ be a random visual feature with mean $\bar v$ and covariance $\Sigma_V = \sum_{i=1}^{d_D} \lambda_i u_i u_i^\top$ with $\lambda_1 \geq \lambda_2 \geq \cdots \geq 0$. Let $P_n$ be the matrix whose rows are $u_1, \ldots, u_n$, and define $Z = P_n(V - \bar v) \in \R^n$. For any predictor $\hat Z(H)$ from a context $H$,
\begin{equation*}
\E\|V - \bar v\|^2 \;-\; \min_{\hat Z}\, \E\bigl\|V - \bar v - P_n^\top \hat Z(H)\bigr\|^2 \;\leq\; \sum_{i=1}^{n} \lambda_i,
\end{equation*}
with equality iff $\E[Z \mid H] = Z$ almost surely.
\end{theorem}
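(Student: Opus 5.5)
The plan is to split the reconstruction error orthogonally along the principal subspace spanned by $u_1,\dots,u_n$. This reduces the minimisation over $\hat Z$ to a standard conditional-expectation (least-squares) problem in $\R^n$. Throughout, I assume $\E\|V\|^2<\infty$, so that all quantities are finite. The minimum is taken over measurable functions $\hat Z$ of $H$ with $\E\|\hat Z(H)\|^2<\infty$.

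\textbf{Step 1 (total error).} Since $\Sigma_V=\sum_i\lambda_i u_iu_i^\top$, we have $\E\|V-\bar v\|^2=\mathrm{tr}\,\Sigma_V=\sum_{i=1}^{d_D}\lambda_i$.

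\textbf{Step 2 (orthogonal split).} Let $Q:=P_n^\top P_n$, the orthogonal projector onto $\mathrm{span}(u_1,\dots,u_n)$. Because the rows of $P_n$ are orthonormal, $P_nP_n^\top=I_n$. Write $V-\bar v=P_n^\top Z+(I-Q)(V-\bar v)$. For any $\hat Z$, the vector $P_n^\top(Z-\hat Z)$ lies in the range of $Q$, while $(I-Q)(V-\bar v)$ lies in its orthogonal complement. Pointwise Pythagoras together with $\|P_n^\top w\|=\|w\|$ then gives
\begin{equation*}
\E\bigl\|V-\bar v-P_n^\top\hat Z(H)\bigr\|^2=\E\|Z-\hat Z(H)\|^2+\E\|(I-Q)(V-\bar v)\|^2 .
\end{equation*}
The second term equals $\mathrm{tr}((I-Q)\Sigma_V)=\sum_{i>n}\lambda_i$ and does not depend on $\hat Z$.

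\textbf{Step 3 (minimise over predictors).} By the $L^2$ projection property of conditional expectation,
\begin{equation*}
\E\|Z-\hat Z(H)\|^2=\E\|Z-\E[Z\mid H]\|^2+\E\|\E[Z\mid H]-\hat Z(H)\|^2 .
\end{equation*}
This follows because the cross term vanishes by the tower property. Hence the minimum is attained at $\hat Z=\E[Z\mid H]$, with value $\E\|Z-\E[Z\mid H]\|^2$.

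\textbf{Step 4 (combine).} Subtracting the minimal error from the total error gives
\begin{equation*}
\sum_{i=1}^n\lambda_i-\E\|Z-\E[Z\mid H]\|^2 .
\end{equation*}
This is at most $\sum_{i=1}^n\lambda_i$ because the subtracted term is nonnegative. Equality holds iff $\E\|Z-\E[Z\mid H]\|^2=0$, i.e.\ iff $\E[Z\mid H]=Z$ almost surely.

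There is no deep obstacle here; the main care points are bookkeeping. First, one must check that the cross term in Step 2 vanishes pointwise, which relies on $P_n^\top\hat Z$ being confined to the principal subspace. Second, the identity $\E\|(I-Q)(V-\bar v)\|^2=\sum_{i>n}\lambda_i$ uses the fact that the $u_i$ are the eigenvectors of $\Sigma_V$. Third, one must justify the existence of the minimiser, which requires $Z\in L^2$ so that the conditional expectation is well defined. If the PCA basis is the empirical one, the statement should be read with $\Sigma_V$ and the expectation taken under the empirical distribution, where all steps go through verbatim.
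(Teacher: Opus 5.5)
Your proposal is correct and follows the paper's proof essentially step for step: the same pointwise Pythagorean split along $\mathrm{span}(u_1,\dots,u_n)$, the same trace identity showing that the $\hat Z$-independent term equals $\sum_{i>n}\lambda_i$, and the same conditional-expectation (law of total variance) argument for both the minimiser and the equality case $\E[Z\mid H]=Z$ almost surely. Your extra integrability assumptions ($\E\|V\|^2<\infty$ and square-integrable predictors) do not change the result; they state explicitly what the paper leaves implicit.
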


\begin{proof}
For any $v \in \R^{d_D}$ and any $\hat z \in \R^n$, the orthogonal decomposition $v - \bar v = P_n^\top P_n (v - \bar v) + (I - P_n^\top P_n)(v - \bar v)$ satisfies, by Pythagoras (since $P_n^\top P_n$ is the orthogonal projector onto $\mathrm{span}(u_1, \ldots, u_n)$),
\begin{equation}
\|v - \bar v - P_n^\top \hat z\|^2 \;=\; \|P_n(v - \bar v) - \hat z\|^2 \;+\; \|(I - P_n^\top P_n)(v - \bar v)\|^2.
\label{app:eq:pythagoras}
\end{equation}
The second term is independent of $\hat z$. Taking expectations and using $\Sigma_V = \E[(V - \bar v)(V - \bar v)^\top]$,
\begin{equation*}
\E\bigl\|(I - P_n^\top P_n)(V - \bar v)\bigr\|^2 \;=\; \mathrm{tr}\bigl[(I - P_n^\top P_n) \Sigma_V\bigr] \;=\; \sum_{i=n+1}^{d_D} \lambda_i,
\end{equation*}
the standard Eckart--Young identity. With $Z = P_n(V - \bar v)$, taking expectations of Equation~\eqref{app:eq:pythagoras} gives
\begin{equation*}
\E\|V - \bar v - P_n^\top \hat Z(H)\|^2 \;=\; \E\|Z - \hat Z(H)\|^2 \;+\; \sum_{i=n+1}^{d_D} \lambda_i.
\end{equation*}
Subtracting both sides from $\E\|V - \bar v\|^2 = \sum_{i=1}^{d_D} \lambda_i$,
\begin{equation*}
\E\|V - \bar v\|^2 - \E\|V - \bar v - P_n^\top \hat Z(H)\|^2 \;=\; \sum_{i=1}^{n} \lambda_i \;-\; \E\|Z - \hat Z(H)\|^2.
\end{equation*}
The right-hand side is maximised over $\hat Z$ when $\E\|Z - \hat Z(H)\|^2$ is minimised. By the law of total variance, this minimum is attained by the minimum-mean-squared-error predictor $\hat Z^\star(H) = \E[Z \mid H]$, with non-negative residual error and zero residual error iff $\E[Z \mid H] = Z$ almost surely.
\end{proof}

\subsection{Proof of Proposition~\ref{prop:powerlaw}}
\label{app:proofs:prop1}

\begin{proposition}[Restated]
Suppose $\lambda_i \leq C i^{-\alpha}$ for some $\alpha > 1$ and $C > 0$. Then
\begin{equation*}
\sum_{i=n+1}^{\infty} \lambda_i \;\leq\; \frac{C}{(\alpha-1) n^{\alpha-1}}.
\end{equation*}
\end{proposition}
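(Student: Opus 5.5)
The plan is to bound the tail sum by an integral, using only that $x \mapsto C x^{-\alpha}$ is positive and decreasing on $(0,\infty)$. The first step applies the hypothesis termwise: $\sum_{i=n+1}^{\infty} \lambda_i \le C \sum_{i=n+1}^{\infty} i^{-\alpha}$. Because $\lambda_i\ge 0$, this tail is a series of nonnegative terms. It is therefore well defined in $[0,\infty]$, and it suffices to show the comparison series is finite with the stated bound.

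For each $i \ge n+1$, monotonicity gives $i^{-\alpha} \le \int_{i-1}^{i} x^{-\alpha}\,dx$, since the integrand is at least $i^{-\alpha}$ on $[i-1,i]$. Summing over $i \ge n+1$ makes the intervals tile $[n,\infty)$, so
\begin{equation*}
\sum_{i=n+1}^{\infty} i^{-\alpha} \;\le\; \int_{n}^{\infty} x^{-\alpha}\,dx \;=\; \frac{n^{1-\alpha}}{\alpha-1}.
\end{equation*}
The integral converges precisely because $\alpha>1$. Multiplying by $C$ gives the claimed bound $C/((\alpha-1)n^{\alpha-1})$. We need $n\ge 1$ so that the interval $[n,\infty)$ avoids the singularity at $0$; this matches the intended range of the rank.

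For the ``consequently'' clause, we return to the identity in the proof of Theorem~\ref{thm:spectral-bound}. There, the best achievable reduction at rank $n$ is $\sum_{i\le n}\lambda_i - \min_{\hat Z}\E\|Z-\hat Z(H)\|^2$. The total available spectral mass is $\sum_{i}\lambda_i$, which is finite by the same integral bound applied from $n=0$, or simply because $\lambda_1$ is finite. The difference between the upper bounds at rank $n$ and at full rank is exactly the tail $\sum_{i>n}\lambda_i$. Hence this gap is $O(n^{-(\alpha-1)})$. If $d_D$ is finite, the tail is a finite sum, and the same bound holds a fortiori.

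There is no real obstacle here. The only points needing care are the direction of the Riemann-sum comparison (it must be the right-endpoint version, so that the integral starts at $n$ rather than $n+1$) and the interpretation of the ``gap'' as a gap between bounds. Lacking further assumptions on $H$, we cannot bound the gap between achieved reductions directly.
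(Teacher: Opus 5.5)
Your proof is correct and is the same integral-test argument the paper uses: compare termwise with $C i^{-\alpha}$, then bound $\sum_{i>n} i^{-\alpha}\le\int_n^\infty x^{-\alpha}\,dx = n^{1-\alpha}/(\alpha-1)$ using the right-endpoint comparison on each $[i-1,i]$. One small slip in your side remark on the total mass: its finiteness does not follow from the integral bound ``applied from $n=0$'', since $\int_0^\infty x^{-\alpha}\,dx$ diverges at $0$; it does follow from your $n=1$ bound, which gives $\sum_i\lambda_i\le\lambda_1 + C/(\alpha-1)$.
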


\begin{proof}
Since $i \mapsto i^{-\alpha}$ is decreasing for $i \geq 1$, the integral test gives
\begin{equation*}
\sum_{i=n+1}^{\infty} i^{-\alpha} \;\leq\; \int_{n}^{\infty} x^{-\alpha}\, dx \;=\; \frac{n^{-(\alpha-1)}}{\alpha - 1}.
\end{equation*}
Multiplying by $C$ and using $\lambda_i \leq C i^{-\alpha}$ yields the claim.
\end{proof}

\subsection{Tightness of the bound under linear-Gaussian assumptions}
\label{app:proofs:tightness}

If $(V, H)$ is jointly Gaussian and $V$ depends on $H$ through a linear map plus independent noise, the conditional expectation $\E[Z \mid H]$ is itself a linear function of $H$. Under this assumption the optimal predictor in Theorem~\ref{thm:spectral-bound} is realised within the predictor class used by ORCA (a linear projection of $h_T$ onto a subspace determined by $\Delta_y$), and the bound becomes tight whenever $H$ contains all the linear information about $Z$. This justifies treating the bound as a target rather than as a loose ceiling in the regime of practical interest.

\section{Architecture and Hyperparameters}
\label{app:architecture}

\subsection{QR map: numerical details}
\label{app:qr_details}
 
The QR map of Equation~\eqref{eq:qr} produces an orthonormal basis
$K(\Delta_y) \in \R^{d \times n}$ from the output of the basis MLP
$g_\phi$. We use the standard Householder QR decomposition as
implemented by \texttt{torch.linalg.qr}, with three implementation
choices that ensure stability under mixed-precision training and at
the start of training when $g_\phi$ is small.
 
\paragraph{Stability at the start of training.}
The output linear layer of $g_\phi$ is initialised with Xavier
initialisation at gain $0.01$ for the weights and zero for the bias,
so $g_\phi(\Delta_y)$ has small Frobenius norm at training step zero
but is not exactly the zero matrix. To ensure that the QR
decomposition is well-defined throughout training, we apply two
preprocessing steps to the candidate matrix before invoking
\texttt{torch.linalg.qr}: (i)~we normalise each of its $n$ columns to
unit Euclidean norm, and (ii)~we add isotropic Gaussian noise of
magnitude $\sigma = 10^{-6}$ per entry. The first step prevents
column scales from drifting and provides a uniform conditioning
target; the second step pushes the matrix away from rank-deficient
configurations and prevents divisions through near-zero diagonal
entries of $R$ in the QR backward.
 
\paragraph{Mixed-precision handling.}
The remainder of the diffusion model is trained in fp16 mixed
precision, but the QR computation is held in an fp32 island: we cast
the input to fp32, disable autocast for the basis MLP and the QR
step, and cast the resulting orthonormal basis $K(\Delta_y)$ back to
the original (fp16) dtype before it is multiplied by the diffusion
hidden state. This avoids the well-known instability of QR under
fp16 arithmetic without paying its cost on the rest of the network,
since the QR step is small relative to the diffusion forward pass.
 
\paragraph{Empirical stability.}
We did not observe the large gradient norms or loss spikes sometimes
reported when backpropagating through QR at near rank-deficient
inputs. Throughout training, we monitor the orthonormality error
$\|K(\Delta_y)^\top K(\Delta_y) - I_n\|_{\max}$ (the element-wise
maximum absolute deviation from the identity) and observe values
below $5 \times 10^{-5}$ from step zero onwards. The combination of
column normalisation, additive noise, and fp32 QR is sufficient to
keep the orthonormality error small and the gradients well-behaved
across the entire $400\text{K}$-step training run.
 
\paragraph{Backward pass.}
The backward pass through \texttt{torch.linalg.qr} is implemented
analytically in PyTorch and involves the inverse of the
upper-triangular factor $R$. In our setting $R$ remains
well-conditioned because the column-normalisation step bounds the
column scales of the input, and the additive noise prevents exact
rank deficiency; the lower bound on the diagonal entries of $R$ is
in turn bounded away from zero in expectation. Gradient flow into
$g_\phi$ is stable in practice.

\subsection{ORCA module architecture}
\label{app:architecture:orca}
 
We summarise the ORCA module in Table~\ref{tab:architecture}. The
module adds three trainable components on top of the diffusion
backbone: the textual projection $W$, the basis MLP $g_\phi$, and
a QR orthonormalisation step. All are removed at inference and
contribute zero parameters and FLOPs to the deployed model.
 
\begin{table}[h]
\centering
\caption{ORCA module architecture. The textual projection $W$ is a single linear
layer mapping the T5 caption embedding into the $d_C$-dimensional CLIP space.
The basis MLP $g_\phi$ is a 2-layer MLP with LayerNorm and GELU; its output
layer is initialised with Xavier gain $0.01$ and zero bias, producing a small
pre-QR candidate matrix at initialisation. The QR step and the PCA projection
contribute no learnable parameters.}
\label{tab:architecture}
\small
\begin{tabular}{@{}lll@{}}
\toprule
Component & Specification & Parameters \\
\midrule
Textual projection $W$            & Linear($d_T \to d_C$)                                       & $d_T \cdot d_C + d_C$ \\
Basis MLP $g_\phi$, hidden      & Linear($d_C \to h_g$) + LayerNorm + GELU                    & $d_C h_g + h_g + 2 h_g$ \\
Basis MLP $g_\phi$, output      & Linear($h_g \to d \cdot n$), Xavier init (gain $0.01$)      & $h_g \cdot d \cdot n + d \cdot n$ \\
Orthonormalisation                & QR (\texttt{torch.linalg.qr}) on $d \times n$ matrix        & 0 (deterministic) \\
PCA projection $P_n$, $\bar v$    & Precomputed from frozen DINOv2                              & 0 (no gradient) \\
\bottomrule
\end{tabular}
\end{table}
 
In our default configuration, $h_g = d_C$ (the basis MLP hidden width
equals the CLIP dimension), so the parameter count of $g_\phi$ is
dominated by the output layer ($h_g \cdot d \cdot n$).
 
\subsection{Hyperparameters}
\label{app:architecture:hyperparams}

We list the hyperparameters used for our main experiments in
Table~\ref{tab:hyperparams}. Values are reported for the DiT-L/2
backbone at MS-COCO $256\times 256$ resolution; values for other
configurations are in Section~\ref{app:additional}.

\begin{table}[h]
\centering
\caption{Hyperparameters for ORCA training. The auxiliary-loss weight
$\lambda$ and rank $n$ are the two hyperparameters specific to ORCA;
all others are inherited from the diffusion backbone.}
\label{tab:hyperparams}
\small
\begin{tabular}{@{}lll@{}}
\toprule
Hyperparameter & Value & Notes \\
\midrule
\multicolumn{3}{l}{\emph{Diffusion training (inherited)}} \\
Optimiser                       & AdamW              & $\beta_1 = 0.9$, $\beta_2 = 0.999$ \\
Learning rate                   & $1 \times 10^{-4}$ & cosine schedule with linear warmup \\
Warmup steps                    & $5{,}000$  & \\
Batch size (global)             & $256$ & \\
Total training iterations       & $400{,}000$        & for the baselines result \\
Method training iterations       & $200{,}000$        & for our method's result \\

Weight decay                    & $0.01$ & \\
EMA decay                       & $0.9999$ & \\
Precision                 & bf16/fp16 mixed (fp32 island for QR)              & \\
\midrule
\multicolumn{3}{l}{\emph{ORCA-specific}} \\
Auxiliary-loss weight $\lambda$ & $1.0$              & ablated in Section~\ref{sec:experiments} \\
Bottleneck rank $n$             & $64$               & ablated over $\{16, 32, 64, 128\}$ \\
Alignment block $M$             & $8$                & out of $24$ blocks; ablated in App.~\ref{app:additional:layer} \\
Visual encoder                  & DINOv2-L/14 & $d_D = 1024$ \\
PCA corpus size                 & $\sim$$5{,}000$ images, $\sim$$100{,}000$ patch vectors & sampled from training set \\
Basis MLP hidden dim $h_g$      & $1024$ & \\
$g_\phi$ output init & Xavier gain $0.01$, zero bias & small pre-QR candidate matrix at initialisation\\
Orthonormalisation & QR (\texttt{torch.linalg.qr}) &
column normalisation plus pre-QR noise \\
 Noise $\sigma$ & $10^{-6}$          & Additive Gaussian in App.~\ref{app:qr_details} \\
\midrule
\multicolumn{3}{l}{\emph{Encoders (frozen)}} \\
CLIP text encoder & CLIP ViT-L/14   & $d_C = 768$ \\
T5 text encoder   & T5-XL     & $d_T = 2048$ \\
\bottomrule
\end{tabular}
\end{table}

\subsection{Compute and runtime}
\label{app:architecture:compute}

Each experiment was run on $4 \times$ A100 GPUs. Training the MMDiT-L baseline for $400$K iterations takes approximately 1 to 2 days; adding ORCA increases per-step wall-clock time by less than 7\%.

\section{Additional Results}
\label{app:additional}

This appendix reports ablations and analyses that did not fit in the
main paper. All experiments use DiT-L/2, DiT-B/2, and U-ViT-L on MS-COCO $256{\times}256$
unless otherwise noted; the default ORCA configuration is rank
$r{=}64$, alignment block $M{=}8$, auxiliary-loss weight
$\lambda{=}1.0$, frozen DINOv2 PCA target with MSE loss.


\subsection{Full per-task GenEval breakdown}
\label{app:additional:geneval_full}

Table~\ref{tab:geneval_full} reports the per-category GenEval breakdown for the main configurations in Table~\ref{tab:main}. Across all three backbones, the same
qualitative pattern emerges: ORCA's gains over the vanilla baseline
concentrate on the categories that require compositional reasoning
rather than single-object photorealism.

Comparing the strongest ORCA configuration to the $400\text{K}$
vanilla baseline on each backbone, the largest multiplicative gains
are on {Color attribution} ($\mathbf{3.4\times}$ on DiT-B/2,
$\mathbf{1.8\times}$ on DiT-L/2, $\mathbf{3.9\times}$ on U-ViT-L),
{Two objects} ($\mathbf{2.4\times}$, $\mathbf{2.1\times}$,
$\mathbf{2.1\times}$ respectively), and {Position}
($\mathbf{2.8\times}$, $\mathbf{1.6\times}$, $\mathbf{1.3\times}$).
{Single object} accuracy, the only category that does not
require binding two semantic attributes, improves more modestly
($1.18$--$1.27\times$). This distribution of gains is consistent
across backbone families and matches the prediction of our
diagnosis: representation alignment via ORCA shifts the bottleneck
away from binding errors rather than from photorealism.
\begin{table}[t]
\centering
\caption{\textbf{Per-task GenEval breakdown} for the main configurations in Table~\ref{tab:main}. ``Single'' = single object, ``Two'' = two objects,
``Count'' = counting, ``Colors'' = single-object color, ``Position'',
``Color attr.'' = color attribution.}
\label{tab:geneval_full}
\small
\setlength{\tabcolsep}{3.5pt}
\renewcommand{\arraystretch}{1.05}
\begin{tabular}{l c c c c c c c c}
\toprule
\textbf{Model} & \textbf{Iter.} & \textbf{Single} & \textbf{Two} & \textbf{Count}
& \textbf{Colors} & \textbf{Position} & \textbf{Color attr.} & \textbf{Overall} \\
\midrule
\multicolumn{9}{l}{\emph{DiT-B/2}} \\
DiT-B/2                  & 100K & 0.381 & 0.023 & 0.072 & 0.250 & 0.008 & 0.005 & 0.123 \\
DiT-B/2                  & 150K & 0.397 & 0.033 & 0.097 & 0.301 & 0.020 & 0.020 & 0.145 \\
DiT-B/2                  & 200K & 0.475 & 0.030 & 0.100 & 0.356 & 0.020 & 0.020 & 0.167 \\
DiT-B/2                  & 400K & 0.491 & 0.020 & 0.106 & 0.378 & 0.028 & 0.020 & 0.174 \\
\,$+$ REPA               & 250K & 0.559 & 0.058 & 0.097 & 0.497 & 0.038 & 0.038 & 0.214 \\
\,$+$ REG               & 200K & 0.561 & 0.061 & 0.099 & 0.501 & 0.039 & 0.041 & 0.217 \\
\,$+$ ORCA               & 100K & 0.447 & 0.038 & 0.100 & 0.436 & 0.025 & 0.035 & 0.180 \\
\,$+$ ORCA               & 150K & 0.497 & 0.043 & 0.138 & 0.476 & 0.040 & 0.040 & 0.206 \\
\,$+$ ORCA               & 250K & \textbf{0.625} & 0.048 & 0.153 & \textbf{0.620} & \textbf{0.078} & \textbf{0.068} & \textbf{0.265} \\
\midrule
\multicolumn{9}{l}{\emph{DiT-L/2}} \\
DiT-L/2                  & 100K & 0.456 & 0.028 & 0.084 & 0.362 & 0.020 & 0.020 & 0.162 \\
DiT-L/2                  & 150K & 0.516 & 0.038 & 0.116 & 0.431 & 0.025 & 0.023 & 0.191 \\
DiT-L/2                  & 200K & 0.500 & 0.035 & 0.106 & 0.481 & 0.033 & 0.013 & 0.195 \\
DiT-L/2                  & 400K & 0.603 & 0.040 & 0.219 & 0.540 & 0.043 & 0.035 & 0.247 \\
\,$+$ REPA               & 400K & \textbf{0.703} & 0.058 & 0.108 & \textbf{0.673} & 0.050 & 0.058 & 0.275 \\
\,$+$ REG                & 200K & 0.653 & 0.035 & 0.150 & 0.644 & 0.033 & 0.035 & 0.258 \\
\,$+$ ORCA               & 100K & 0.606 & 0.038 & 0.141 & 0.561 & 0.035 & 0.048 & 0.238 \\
\,$+$ ORCA               & 150K & 0.634 & 0.061 & 0.159 & 0.638 & 0.073 & 0.068 & 0.272 \\
\,$+$ ORCA               & 200K & 0.656 & \textbf{0.073} & \textbf{0.194} & 0.657 & \textbf{0.075} & \textbf{0.090} & \textbf{0.291} \\
\midrule
\multicolumn{9}{l}{\emph{U-ViT-L}} \\
U-ViT-L                  & 100K & 0.381 & 0.013 & 0.066 & 0.295 & 0.013 & 0.015 & 0.130 \\
U-ViT-L                  & 150K & 0.400 & 0.018 & 0.100 & 0.340 & 0.013 & 0.015 & 0.148 \\
U-ViT-L                  & 200K & 0.466 & 0.028 & 0.088 & 0.348 & 0.018 & 0.018 & 0.161 \\
U-ViT-L                  & 400K & 0.534 & 0.038 & 0.109 & 0.399 & 0.038 & 0.015 & 0.189 \\
\,$+$ REPA                & 400K & 0.553 & 0.035 & 0.126 & 0.544 & 0.033 & 0.035 & 0.221 \\
\,$+$ REG                & 200K & 0.553 & 0.028 & 0.116 & 0.582 & 0.020 & 0.018 & 0.219 \\
\,$+$ ORCA               & 100K & 0.566 & 0.040 & 0.100 & 0.505 & 0.030 & 0.068 & 0.218 \\
\,$+$ ORCA               & 150K & 0.616 & 0.073 & 0.159 & 0.585 & \textbf{0.053} & \textbf{0.073} & 0.260 \\
\,$+$ ORCA               & 200K & \textbf{0.666} & \textbf{0.081} & \textbf{0.188} & \textbf{0.585} & 0.048 & 0.058 & \textbf{0.271} \\
\bottomrule
\end{tabular}
\end{table}

\subsection{Subspace rank}
\label{app:additional:rank}

Table~\ref{tab:rank} reports the effect of the bottleneck rank $r$ on
DiT-L/2 at $150\text{K}$ steps. FID and GenEval both improve as $r$
grows from $16$ to $64$, plateau around $r{\in}\{64,128\}$, and FID
begins to degrade above this range. 

\begin{table}[h]
\centering
\caption{\textbf{Effect of subspace rank $r$} on DiT-L/2 at $150\mathrm{K}$ steps.
FID is best at $r=64$, while GenEval is highest at $r=128$.
Best per column in bold.}
\label{tab:rank}
\small
\setlength{\tabcolsep}{8pt}
\renewcommand{\arraystretch}{1.05}
\begin{tabular}{c c c c c}
\toprule
{Rank $r$} & {FID}~$\downarrow$ & {GenEval}~$\uparrow$
& {CLIP-L/14}~$\uparrow$ & {PickScore}~$\uparrow$ \\
\midrule
16          & 18.79          & 0.242          & \textbf{0.1076} & \textbf{17.16} \\
32          & 19.33          & 0.263          & 0.1063          & 17.12 \\
\textbf{64} & \textbf{16.76} & 0.272          & 0.1042          & 17.11 \\
128         & 18.29          & \textbf{0.279} & 0.1045          & 17.14 \\
\bottomrule
\end{tabular}
\end{table}

\subsection{Alignment block placement}
\label{app:additional:layer}

Table~\ref{tab:layer} reports performance as a function of the
diffusion block at which the auxiliary loss is applied, with all other
settings held fixed at $r{=}64$, $\lambda{=}1.0$. Performance is
U-shaped in depth: aligning at intermediate depth (block~$8$ of $24$)
is best on FID and GenEval, while alignment at the very last block
collapses (FID${=}64.16$, GenEval${=}0.054$). This indicates that the
diffusion model's representation immediately before the output head is
too task-specific to benefit from the auxiliary signal, consistent
with REPA's observation that self-supervised structure is most useful
where the model's representations are still abstract
\citep{yu2024representation}. We use block~$8$ throughout.

\begin{table}[h]
\centering
\caption{\textbf{Layer placement of the auxiliary loss} on DiT-L/2 at
$50\text{K}$ training steps. Best per column in {bold}.}
\label{tab:layer}
\small
\setlength{\tabcolsep}{8pt}
\renewcommand{\arraystretch}{1.05}
\begin{tabular}{l c c c c}
\toprule
\textbf{Layer}        & \textbf{FID}~$\downarrow$ & \textbf{GenEval}~$\uparrow$
                      & \textbf{CLIP-B/32}~$\uparrow$ & \textbf{PickScore}~$\uparrow$ \\
\midrule
1  (very early)       & 36.63          & 0.130          & 0.2159          & 17.33 \\
\textbf{8} (default)  & \textbf{27.61} & \textbf{0.165} & \textbf{0.2172} & 17.25 \\
16 (middle)           & 30.23          & 0.162          & 0.2165          & 17.23 \\
24 (last)             & 64.16          & 0.054          & 0.2156          & \textbf{17.40} \\
\bottomrule
\end{tabular}
\end{table}

\subsection{Auxiliary-loss weight $\lambda$}
\label{app:additional:lambda}

Table~\ref{tab:lambda} reports a sweep of the auxiliary-loss weight
$\lambda$ at $50\text{K}$ training steps. Performance is robust over
roughly one order of magnitude around $\lambda{=}1.0$; very small
($\lambda{=}0.1$) and very large ($\lambda{=}5.0$) values both degrade
FID. The PickScore column is essentially flat across the sweep,
consistent with the observation in the main paper that PickScore is
dominated by surface aesthetics rather than compositional fidelity.

\begin{table}[h]
\centering
\caption{\textbf{Effect of the auxiliary-loss weight $\lambda$} on DiT-L/2 at $50\mathrm{K}$ steps.
The $\lambda=1.0$ configuration is the default used for the ORCA runs reported in
Table~1. Best per column in bold.}
\label{tab:lambda}
\small
\setlength{\tabcolsep}{6pt}
\renewcommand{\arraystretch}{1.05}
\begin{tabular}{c c c c c}
\toprule
$\boldsymbol{\lambda}$ & \textbf{FID}~$\downarrow$ & \textbf{GenEval}~$\uparrow$
                       & \textbf{CLIP}~$\uparrow$ & \textbf{PickScore}~$\uparrow$ \\
\midrule
0.1           & 32.52          & 0.135          & 0.2163          & \textbf{17.29} \\
0.5           & 28.43          & \textbf{0.182} & 0.2164          & 17.26 \\
\textbf{1.0}  & \textbf{27.61} & 0.165          & \textbf{0.2172} & 17.25 \\
2.0           & 31.42          & 0.172          & 0.2151          & 17.22 \\
5.0           & 31.21          & 0.157          & 0.2159          & 17.25 \\
\bottomrule
\end{tabular}
\end{table}

\subsection{Loss function and target construction}
\label{app:additional:loss_target}

Table~\ref{tab:loss_target} compares the design choices for the visual
target and alignment loss. A learnable linear projection trained
end-to-end with MSE collapses to a trivial solution (the target shrinks
to a small subspace and predictor accuracy becomes meaningless). Adding
VICReg regularisation \citep{bardes2022vicreg} prevents the collapse but
recovers only partially. Replacing the target with the frozen
top-$r$ DINOv2 PCA, the choice used in ORCA, decisively
outperforms both. A Gaussian negative-log-likelihood (NLL) loss on the
same frozen target underperforms MSE.

The frozen target is the key design choice: it removes target-side
collapse as a stationary point of optimisation, eliminating the need
for VICReg or other anti-collapse regularisers entirely.

\begin{table}[h]
\centering
\caption{\textbf{Loss function and target construction} on DiT-L/2.
Frozen PCA with MSE is the strongest configuration. Best per column in
{bold}.}
\label{tab:loss_target}
\small
\setlength{\tabcolsep}{6pt}
\renewcommand{\arraystretch}{1.05}
\begin{tabular}{l l c c}
\toprule
\textbf{Target} & \textbf{Loss} & \textbf{FID}~$\downarrow$
                & \textbf{GenEval}~$\uparrow$ \\
\midrule
Learnable linear (no reg.)     & MSE                  & collapsed       & collapsed     \\
Learnable linear + VICReg      & MSE                  & 29.49           & 0.145  \\
\textbf{Frozen PCA (ours)}     & \textbf{MSE}         & \textbf{16.76}  & \textbf{0.272} \\
Frozen PCA                     & Gaussian NLL         & 23.31           & 0.257  \\
\bottomrule
\end{tabular}
\end{table}

\subsection{Encoder choice}
\label{app:additional:encoders}


We include an additional encoder configuration in Table~10 as a preliminary
sanity check. A systematic encoder-swap study is left for future work.


\begin{table}[h]
\centering
\caption{Preliminary sensitivity check under an additional pretrained-encoder configuration on DiT-L/2.}
\label{tab:encoders}
\small
\setlength{\tabcolsep}{4pt}
\renewcommand{\arraystretch}{1.05}
\begin{tabular}{l l l c c}
\toprule
\textbf{CLIP} & \textbf{T5} & \textbf{DINOv2}
              & \textbf{FID}~$\downarrow$ & \textbf{GenEval}~$\uparrow$ \\
\midrule
\textbf{ViT-B/32} & \textbf{T5-XL} & \textbf{ViT-B/14} & \textbf{16.76} & \textbf{0.272} \\
\bottomrule
\end{tabular}
\end{table}
\subsection{Projection metrics over training}
\label{app:additional:projection}

Figure~\ref{fig:projection_appendix} reports three quantities measured
on validation samples over the course of training:

\begin{enumerate}
  \item the energy ratio $\|P_{\Delta y} h_t\|^2 / \|h_t\|^2$, which
        measures what fraction of the diffusion hidden state lies in
        $\operatorname{span} K(\Delta_y)$;
  \item the cosine similarity between the aligned component
        $P_{\Delta y} h_t$ and the lifted DINOv2 PCA target
        $K(\Delta_y) z(x)$;
  \item the cosine similarity between the orthogonal-complement
        residual $(I-P_{\Delta y}) h_t$ and the same target.
\end{enumerate}

The energy ratio rises from $\sim$$0.58$ at initialisation to
$\sim$$0.80$ within $\sim$$80\text{K}$ steps and remains stable
thereafter. The aligned-component cosine rises from $\sim$$0.52$ to
$\sim$$0.80$ on a similar timescale. The residual cosine remains at
zero throughout: this is forced by construction
($P_{\Delta y}(I-P_{\Delta y}) = 0$), and we report it as a sanity
check on the implementation rather than a learned property.

\begin{figure}[h]
\centering
\includegraphics[width=0.8\linewidth]{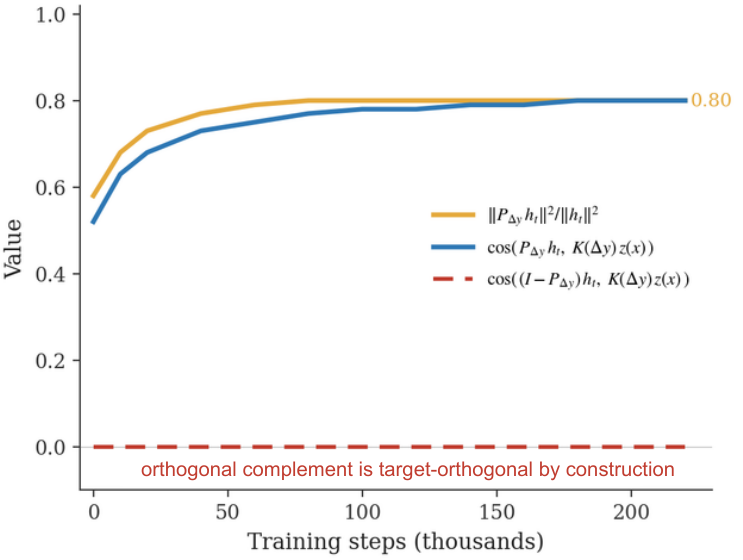}
\caption{\textbf{Projection metrics over training.} Energy ratio (orange),
aligned-component cosine (blue), and residual-target cosine (red, dashed).
The aligned component captures $80\%$ of $\|h_t\|^2$ in a rank-$64$ subspace
and reaches cosine $0.80$ with the DINOv2 PCA target. The residual-target
cosine is zero by construction and is included only as an implementation sanity
check; it should not be interpreted as evidence that the orthogonal complement
contains no visual information.}
\label{fig:projection_appendix}
\end{figure}


\section{Bootstrap error bars on main metrics}
\label{app:additional:errorbars}

Compute constraints did not permit retraining each model
configuration with multiple seeds (each $400\text{K}$-step run takes
1 to 2 days on $4 \times$ A100 GPUs). To still characterise the
statistical reliability of the headline FID and GenEval numbers in
Table~\ref{tab:main}, we estimate sampling-level uncertainty
\emph{post-hoc} via bootstrap resampling of the cached evaluation
artefacts.

\paragraph{FID bootstrap.}
For each configuration we resample with replacement from the cached
pool of $30\text{K}$ generated Inception-V3 features and recompute
FID against the fixed real-feature reference distribution
($30\text{K}$ MS-COCO validation features, identical across all
methods). The real distribution is held fixed; only the generated
features are resampled, since the source of metric uncertainty under
this protocol is which $30\text{K}$ samples were drawn from the
generator. We report the unbiased ($\mathrm{ddof}{=}1$) standard
deviation across $100$ bootstrap replicates with seed $0$.

\paragraph{GenEval bootstrap.}
For each configuration we resample with replacement from the
per-image binary correctness scores ($553$ prompts $\times$ $4$
images per prompt $=2{,}212$ trials) and report the unbiased
standard deviation of the resulting GenEval averages across
$1{,}000$ bootstrap replicates with seed $0$.

\paragraph{Reproducibility.}
The bootstrap script, fixed RNG seeds, cached real features, and
per-config cached generated features are released with the code so
that the standard deviations in Table~\ref{tab:errorbars} can be
exactly reproduced.

\begin{table}[h]
\centering
\caption{\textbf{Bootstrap error bars on FID and GenEval} for the
DiT-L/2 configurations of Table~\ref{tab:main}. FID standard
deviations come from $100$ bootstrap resamples of the $30\text{K}$
generated feature pool; GenEval standard deviations come from
$1{,}000$ bootstrap resamples of the per-image correctness scores.
All standard deviations are $1$-$\sigma$, unbiased
($\mathrm{ddof}{=}1$). DiT-B/2 and U-ViT-L bootstrap evaluations
are pending and will be added if the runs complete in time.}
\label{tab:errorbars}
\small
\setlength{\tabcolsep}{6pt}
\renewcommand{\arraystretch}{1.10}
\begin{tabular}{l c c c}
\toprule
\textbf{Model} & \textbf{Iter.} &
\textbf{FID}~$\downarrow$ & \textbf{GenEval}~$\uparrow$ \\
\midrule
\multicolumn{4}{l}{\emph{DiT-L/2}} \\
DiT-L/2                              & 100K & $27.77 \pm 0.16$ & $0.162 \pm 0.008$ \\
DiT-L/2                              & 150K & $24.26 \pm 0.17$ & $0.191 \pm 0.008$ \\
DiT-L/2                              & 200K & $24.71 \pm 0.16$ & $0.195 \pm 0.009$ \\
DiT-L/2                              & 400K & $24.01 \pm 0.16$ & $0.247 \pm 0.009$ \\
\,$+$ REPA                           & 400K & $20.05 \pm 0.12$ & $0.275 \pm 0.009$ \\
\,$+$ \textbf{ORCA} (ours)           & 100K & $20.51 \pm 0.13$ & $0.238 \pm 0.009$ \\
\,$+$ \textbf{ORCA} (ours)           & 150K & $16.76 \pm 0.10$ & $0.272 \pm 0.009$ \\
\,$+$ \textbf{ORCA} (ours)           & 200K & $\mathbf{16.65 \pm 0.11}$ & $\mathbf{0.291 \pm 0.010}$ \\
\bottomrule
\end{tabular}
\end{table}

\clearpage
\newpage

\section*{NeurIPS Paper Checklist}

\begin{enumerate}

\item {\bf Claims}
    \item[] Question: Do the main claims made in the abstract and introduction accurately reflect the paper's contributions and scope?
    \item[] Answer: \answerYes{} 
    \item[] Justification: All the claims are discussed in the section \ref{sec:method} and section \ref{sec:experiments}
    \item[] Guidelines:
    \begin{itemize}
        \item The answer \answerNA{} means that the abstract and introduction do not include the claims made in the paper.
        \item The abstract and/or introduction should clearly state the claims made, including the contributions made in the paper and important assumptions and limitations. A \answerNo{} or \answerNA{} answer to this question will not be perceived well by the reviewers. 
        \item The claims made should match theoretical and experimental results, and reflect how much the results can be expected to generalize to other settings. 
        \item It is fine to include aspirational goals as motivation as long as it is clear that these goals are not attained by the paper. 
    \end{itemize}

\item {\bf Limitations}
    \item[] Question: Does the paper discuss the limitations of the work performed by the authors?
    \item[] Answer: \answerYes{} 
    \item[] Justification: all the details of experiments and their setup are already being discussed in section \ref{sec:conclusion}. 
    \item[] Guidelines:
    \begin{itemize}
        \item The answer \answerNA{} means that the paper has no limitation while the answer \answerNo{} means that the paper has limitations, but those are not discussed in the paper. 
        \item The authors are encouraged to create a separate ``Limitations'' section in their paper.
        \item The paper should point out any strong assumptions and how robust the results are to violations of these assumptions (e.g., independence assumptions, noiseless settings, model well-specification, asymptotic approximations only holding locally). The authors should reflect on how these assumptions might be violated in practice and what the implications would be.
        \item The authors should reflect on the scope of the claims made, e.g., if the approach was only tested on a few datasets or with a few runs. In general, empirical results often depend on implicit assumptions, which should be articulated.
        \item The authors should reflect on the factors that influence the performance of the approach. For example, a facial recognition algorithm may perform poorly when image resolution is low or images are taken in low lighting. Or a speech-to-text system might not be used reliably to provide closed captions for online lectures because it fails to handle technical jargon.
        \item The authors should discuss the computational efficiency of the proposed algorithms and how they scale with dataset size.
        \item If applicable, the authors should discuss possible limitations of their approach to address problems of privacy and fairness.
        \item While the authors might fear that complete honesty about limitations might be used by reviewers as grounds for rejection, a worse outcome might be that reviewers discover limitations that aren't acknowledged in the paper. The authors should use their best judgment and recognize that individual actions in favor of transparency play an important role in developing norms that preserve the integrity of the community. Reviewers will be specifically instructed to not penalize honesty concerning limitations.
    \end{itemize}

\item {\bf Theory assumptions and proofs}
    \item[] Question: For each theoretical result, does the paper provide the full set of assumptions and a complete (and correct) proof?
    \item[] Answer: \answerYes{} 
    \item[] Justification: Yes, you can find them in section \ref{sec:method} and section \ref{sec:method:theory}.
    \item[] Guidelines:
    \begin{itemize}
        \item The answer \answerNA{} means that the paper does not include theoretical results. 
        \item All the theorems, formulas, and proofs in the paper should be numbered and cross-referenced.
        \item All assumptions should be clearly stated or referenced in the statement of any theorems.
        \item The proofs can either appear in the main paper or the supplemental material, but if they appear in the supplemental material, the authors are encouraged to provide a short proof sketch to provide intuition. 
        \item Inversely, any informal proof provided in the core of the paper should be complemented by formal proofs provided in appendix or supplemental material.
        \item Theorems and Lemmas that the proof relies upon should be properly referenced. 
    \end{itemize}

    \item {\bf Experimental result reproducibility}
    \item[] Question: Does the paper fully disclose all the information needed to reproduce the main experimental results of the paper to the extent that it affects the main claims and/or conclusions of the paper (regardless of whether the code and data are provided or not)?
    \item[] Answer: \answerYes{} 
    \item[] Justification: all the details of experiments and their setup are already being discussed in section \ref{sec:exp:setup}, \ref{app:architecture:compute}, and section \ref{app:architecture:hyperparams} for hyperparameters. 
    \item[] Guidelines:
    \begin{itemize}
        \item The answer \answerNA{} means that the paper does not include experiments.
        \item If the paper includes experiments, a \answerNo{} answer to this question will not be perceived well by the reviewers: Making the paper reproducible is important, regardless of whether the code and data are provided or not.
        \item If the contribution is a dataset and\slash or model, the authors should describe the steps taken to make their results reproducible or verifiable. 
        \item Depending on the contribution, reproducibility can be accomplished in various ways. For example, if the contribution is a novel architecture, describing the architecture fully might suffice, or if the contribution is a specific model and empirical evaluation, it may be necessary to either make it possible for others to replicate the model with the same dataset, or provide access to the model. In general. releasing code and data is often one good way to accomplish this, but reproducibility can also be provided via detailed instructions for how to replicate the results, access to a hosted model (e.g., in the case of a large language model), releasing of a model checkpoint, or other means that are appropriate to the research performed.
        \item While NeurIPS does not require releasing code, the conference does require all submissions to provide some reasonable avenue for reproducibility, which may depend on the nature of the contribution. For example
        \begin{enumerate}
            \item If the contribution is primarily a new algorithm, the paper should make it clear how to reproduce that algorithm.
            \item If the contribution is primarily a new model architecture, the paper should describe the architecture clearly and fully.
            \item If the contribution is a new model (e.g., a large language model), then there should either be a way to access this model for reproducing the results or a way to reproduce the model (e.g., with an open-source dataset or instructions for how to construct the dataset).
            \item We recognize that reproducibility may be tricky in some cases, in which case authors are welcome to describe the particular way they provide for reproducibility. In the case of closed-source models, it may be that access to the model is limited in some way (e.g., to registered users), but it should be possible for other researchers to have some path to reproducing or verifying the results.
        \end{enumerate}
    \end{itemize}

\item {\bf Open access to data and code}
    \item[] Question: Does the paper provide open access to the data and code, with sufficient instructions to faithfully reproduce the main experimental results, as described in supplemental material?
    \item[] Answer: \answerYes{} 
    \item[] Justification:  We used public data, and the code will be released with the submission.
    \item[] Guidelines: 
    \begin{itemize}
        \item The answer \answerNA{} means that paper does not include experiments requiring code.
        \item Please see the NeurIPS code and data submission guidelines (\url{https://neurips.cc/public/guides/CodeSubmissionPolicy}) for more details.
        \item While we encourage the release of code and data, we understand that this might not be possible, so \answerNo{} is an acceptable answer. Papers cannot be rejected simply for not including code, unless this is central to the contribution (e.g., for a new open-source benchmark).
        \item The instructions should contain the exact command and environment needed to run to reproduce the results. See the NeurIPS code and data submission guidelines (\url{https://neurips.cc/public/guides/CodeSubmissionPolicy}) for more details.
        \item The authors should provide instructions on data access and preparation, including how to access the raw data, preprocessed data, intermediate data, and generated data, etc.
        \item The authors should provide scripts to reproduce all experimental results for the new proposed method and baselines. If only a subset of experiments are reproducible, they should state which ones are omitted from the script and why.
        \item At submission time, to preserve anonymity, the authors should release anonymized versions (if applicable).
        \item Providing as much information as possible in supplemental material (appended to the paper) is recommended, but including URLs to data and code is permitted.
    \end{itemize}

\item {\bf Experimental setting/details}
    \item[] Question: Does the paper specify all the training and test details (e.g., data splits, hyperparameters, how they were chosen, type of optimizer) necessary to understand the results?
    \item[] Answer: \answerYes{} 
    \item[] Justification: all the details of experiments and their setup are already being discussed in section \ref{sec:exp:setup}, \ref{app:architecture:compute}, and section \ref{app:architecture:hyperparams} for hyperparameters. 
    \item[] Guidelines:
    \begin{itemize}
        \item The answer \answerNA{} means that the paper does not include experiments.
        \item The experimental setting should be presented in the core of the paper to a level of detail that is necessary to appreciate the results and make sense of them.
        \item The full details can be provided either with the code, in appendix, or as supplemental material.
    \end{itemize}

\item {\bf Experiment statistical significance}
    \item[] Question: Does the paper report error bars suitably and correctly defined or other appropriate information about the statistical significance of the experiments?
    \item[] Answer: \answerYes{}
\item[] Justification:FID error bars are estimated by resampling
with replacement from the $30\text{K}$ generated Inception-V3 feature pool ($100$ bootstrap replicates, seed $0$); GenEval error bars come from resampling with replacement over the per-image correctness scores ($553$ prompts $\times$ $4$ images $=2{,}212$ trials, $1{,}000$ bootstrap replicates, seed $0$). The full methodology and sanity checks are in Appendix~\ref{app:additional:errorbars}.
 
    \item[] Guidelines:
    \begin{itemize}
        \item The answer \answerNA{} means that the paper does not include experiments.
        \item The authors should answer \answerYes{} if the results are accompanied by error bars, confidence intervals, or statistical significance tests, at least for the experiments that support the main claims of the paper.
        \item The factors of variability that the error bars are capturing should be clearly stated (for example, train/test split, initialization, random drawing of some parameter, or overall run with given experimental conditions).
        \item The method for calculating the error bars should be explained (closed form formula, call to a library function, bootstrap, etc.)
        \item The assumptions made should be given (e.g., Normally distributed errors).
        \item It should be clear whether the error bar is the standard deviation or the standard error of the mean.
        \item It is OK to report 1-sigma error bars, but one should state it. The authors should preferably report a 2-sigma error bar than state that they have a 96\% CI, if the hypothesis of Normality of errors is not verified.
        \item For asymmetric distributions, the authors should be careful not to show in tables or figures symmetric error bars that would yield results that are out of range (e.g., negative error rates).
        \item If error bars are reported in tables or plots, the authors should explain in the text how they were calculated and reference the corresponding figures or tables in the text.
    \end{itemize}

\item {\bf Experiments compute resources}
    \item[] Question: For each experiment, does the paper provide sufficient information on the computer resources (type of compute workers, memory, time of execution) needed to reproduce the experiments?
    \item[] Answer: \answerYes{} 
    \item[] Justification: You can find the details in section \ref{app:architecture:compute}
    \item[] Guidelines: 
    \begin{itemize}
        \item The answer \answerNA{} means that the paper does not include experiments.
        \item The paper should indicate the type of compute workers CPU or GPU, internal cluster, or cloud provider, including relevant memory and storage.
        \item The paper should provide the amount of compute required for each of the individual experimental runs as well as estimate the total compute. 
        \item The paper should disclose whether the full research project required more compute than the experiments reported in the paper (e.g., preliminary or failed experiments that didn't make it into the paper). 
    \end{itemize}
    
\item {\bf Code of ethics}
    \item[] Question: Does the research conducted in the paper conform, in every respect, with the NeurIPS Code of Ethics \url{https://neurips.cc/public/EthicsGuidelines}?
    \item[] Answer: \answerYes{} 
    \item[] Justification: This paper conforms in every respect, with the NeurIPS Code of Ethics. 
    \item[] Guidelines:
    \begin{itemize}
        \item The answer \answerNA{} means that the authors have not reviewed the NeurIPS Code of Ethics.
        \item If the authors answer \answerNo, they should explain the special circumstances that require a deviation from the Code of Ethics.
        \item The authors should make sure to preserve anonymity (e.g., if there is a special consideration due to laws or regulations in their jurisdiction).
    \end{itemize}

\item {\bf Broader impacts}
    \item[] Question: Does the paper discuss both potential positive societal impacts and negative societal impacts of the work performed?
    \item[] Answer: \answerYes{} 
    \item[] Justification: ORCA is a training-time method for improving text-to-image
generation quality and compositional fidelity. Positive impacts include more
faithful controllable generation and reduced prompt ambiguity. Potential
negative impacts include improved generation of misleading or synthetic visual
content. We do not release pretrained image-generation checkpoints, and the
method is evaluated on established public research datasets and backbones.
    \item[] Guidelines:
    \begin{itemize}
        \item The answer \answerNA{} means that there is no societal impact of the work performed.
        \item If the authors answer \answerNA{} or \answerNo, they should explain why their work has no societal impact or why the paper does not address societal impact.
        \item Examples of negative societal impacts include potential malicious or unintended uses (e.g., disinformation, generating fake profiles, surveillance), fairness considerations (e.g., deployment of technologies that could make decisions that unfairly impact specific groups), privacy considerations, and security considerations.
        \item The conference expects that many papers will be foundational research and not tied to particular applications, let alone deployments. However, if there is a direct path to any negative applications, the authors should point it out. For example, it is legitimate to point out that an improvement in the quality of generative models could be used to generate Deepfakes for disinformation. On the other hand, it is not needed to point out that a generic algorithm for optimizing neural networks could enable people to train models that generate Deepfakes faster.
        \item The authors should consider possible harms that could arise when the technology is being used as intended and functioning correctly, harms that could arise when the technology is being used as intended but gives incorrect results, and harms following from (intentional or unintentional) misuse of the technology.
        \item If there are negative societal impacts, the authors could also discuss possible mitigation strategies (e.g., gated release of models, providing defenses in addition to attacks, mechanisms for monitoring misuse, mechanisms to monitor how a system learns from feedback over time, improving the efficiency and accessibility of ML).
    \end{itemize}
    
\item {\bf Safeguards}
    \item[] Question: Does the paper describe safeguards that have been put in place for responsible release of data or models that have a high risk for misuse (e.g., pre-trained language models, image generators, or scraped datasets)?
    \item[] Answer: \answerNA{} 
    \item[] Justification: ORCA is a training-time auxiliary loss; we release no pretrained generative models, scraped datasets, or other artefacts of dual-use concern. The diffusion backbones and benchmarks used are established public ones (DiT, U-ViT, MS-COCO).
    \item[] Guidelines:
    \begin{itemize}
        \item The answer \answerNA{} means that the paper poses no such risks.
        \item Released models that have a high risk for misuse or dual-use should be released with necessary safeguards to allow for controlled use of the model, for example by requiring that users adhere to usage guidelines or restrictions to access the model or implementing safety filters. 
        \item Datasets that have been scraped from the Internet could pose safety risks. The authors should describe how they avoided releasing unsafe images.
        \item We recognize that providing effective safeguards is challenging, and many papers do not require this, but we encourage authors to take this into account and make a best faith effort.
    \end{itemize}

\item {\bf Licenses for existing assets}
    \item[] Question: Are the creators or original owners of assets (e.g., code, data, models), used in the paper, properly credited and are the license and terms of use explicitly mentioned and properly respected?
    \item[] Answer: \answerYes{} 
    \item[] Justification: All third-party assets used in this paper are
established public research artefacts and are cited at first use.
We use MS-COCO~\citep{lin2014coco} (CC BY 4.0 for annotations) for
training data; CLIP ViT-L/14~\citep{radford2021learning} (MIT
license) and T5-v1.1~\citep{raffel2020exploring} (Apache 2.0) as frozen
text encoders; DINOv2~\citep{oquab2024dinov2} (Apache 2.0) as the
visual target encoder; and the DiT~\citep{peebles2023scalable} and
U-ViT~\citep{bao2023uvit} backbones (MIT licenses). Our REPA
baseline reuses the authors' released configuration~\citep{yu2024representation}
(MIT license).
    \item[] Guidelines:
    \begin{itemize}
        \item The answer \answerNA{} means that the paper does not use existing assets.
        \item The authors should cite the original paper that produced the code package or dataset.
        \item The authors should state which version of the asset is used and, if possible, include a URL.
        \item The name of the license (e.g., CC-BY 4.0) should be included for each asset.
        \item For scraped data from a particular source (e.g., website), the copyright and terms of service of that source should be provided.
        \item If assets are released, the license, copyright information, and terms of use in the package should be provided. For popular datasets, \url{paperswithcode.com/datasets} has curated licenses for some datasets. Their licensing guide can help determine the license of a dataset.
        \item For existing datasets that are re-packaged, both the original license and the license of the derived asset (if it has changed) should be provided.
        \item If this information is not available online, the authors are encouraged to reach out to the asset's creators.
    \end{itemize}

\item {\bf New assets}
    \item[] Question: Are new assets introduced in the paper well documented and is the documentation provided alongside the assets?
    \item[] Answer: \answerYes{}
    \item[] Justification: We release the anonymised ORCA training code with the submission, including the basis MLP and QR map implementation, training scripts for the three backbones, and the configurations needed to reproduce the main results in Table ~\ref{tab:main}. The code is released under the MIT license alongside a README documenting installation, dependencies, and training commands.

\item {\bf Crowdsourcing and research with human subjects}
    \item[] Question: For crowdsourcing experiments and research with human subjects, does the paper include the full text of instructions given to participants and screenshots, if applicable, as well as details about compensation (if any)? 
    \item[] Answer: \answerNA{}
    \item[] Justification: The paper does not involve crowdsourcing or
    research with human subjects.
    \item[] Guidelines:
    \begin{itemize}
        \item The answer \answerNA{} means that the paper does not involve crowdsourcing nor research with human subjects.
        \item Including this information in the supplemental material is fine, but if the main contribution of the paper involves human subjects, then as much detail as possible should be included in the main paper. 
        \item According to the NeurIPS Code of Ethics, workers involved in data collection, curation, or other labor should be paid at least the minimum wage in the country of the data collector. 
    \end{itemize}

\item {\bf Institutional review board (IRB) approvals or equivalent for research with human subjects}
    \item[] Question: Does the paper describe potential risks incurred by study participants, whether such risks were disclosed to the subjects, and whether Institutional Review Board (IRB) approvals (or an equivalent approval/review based on the requirements of your country or institution) were obtained?
    \item[] Answer: \answerNA{}
    \item[] Justification: The paper does not involve crowdsourcing or research with human subjects.
    \item[] Guidelines:
    \begin{itemize}
        \item The answer \answerNA{} means that the paper does not involve crowdsourcing nor research with human subjects.
        \item Depending on the country in which research is conducted, IRB approval (or equivalent) may be required for any human subjects research. If you obtained IRB approval, you should clearly state this in the paper. 
        \item We recognize that the procedures for this may vary significantly between institutions and locations, and we expect authors to adhere to the NeurIPS Code of Ethics and the guidelines for their institution. 
        \item For initial submissions, do not include any information that would break anonymity (if applicable), such as the institution conducting the review.
    \end{itemize}

\item {\bf Declaration of LLM usage}
    \item[] Question: Does the paper describe the usage of LLMs if it is an important, original, or non-standard component of the core methods in this research? Note that if the LLM is used only for writing, editing, or formatting purposes and does \emph{not} impact the core methodology, scientific rigor, or originality of the research, declaration is not required.
\item[] Answer: \answerNA{}
\item[] Justification: LLMs were used only for writing assistance (editing and formatting). They were not used as a component of the core methodology and do not impact the scientific rigour or originality of the research.
    \item[] Guidelines:
    \begin{itemize}
        \item The answer \answerNA{} means that the core method development in this research does not involve LLMs as any important, original, or non-standard components.
        \item Please refer to our LLM policy in the NeurIPS handbook for what should or should not be described.
    \end{itemize}

\end{enumerate}

\end{document}